\def\eqref#1{equation~\ref{#1}}
\def\1{\bm{1}}
\def\rx{{\textnormal{x}}}
\def\rz{{\textnormal{z}}}
\def\vmu{{\bm{\mu}}}
\def\vtheta{{\bm{\theta}}}
\def\mSigma{{\bm{\Sigma}}}
\DeclareMathAlphabet{\mathsfit}{\encodingdefault}{\sfdefault}{m}{sl}
\SetMathAlphabet{\mathsfit}{bold}{\encodingdefault}{\sfdefault}{bx}{n}
\newcommand{\E}{\mathbb{E}}
\newcommand{\KL}{D_{\mathrm{KL}}}
\title{Testing for Typicality with Respect to an \\ Ensemble of Learned Distributions}
\author{Forrest Laine \& Claire Tomlin \\
Department of Electrical Engineering and Computer Sciences\\
University of California, Berkeley\\
Berkeley, CA, 94720, USA \\
\texttt{\{forrest.laine, tomlin\}@eecs.berkeley.edu} 
}
\newtheorem{theorem}{Theorem}
\newtheorem{lemma}{Lemma}
\begin{document}

\maketitle

\begin{abstract}
Methods of performing anomaly detection on high-dimensional data sets are needed, since algorithms which are trained on data are only expected to perform well on data that is similar to the training data. There are theoretical results on the ability to detect if a population of data is likely to come from a known base distribution, which is known as the goodness-of-fit problem. One-sample approaches to this problem offer significant computational advantages for online testing, but require knowing a model of the base distribution. The ability to correctly reject anomalous data in this setting hinges on the accuracy of the model of the base distribution. For high dimensional data, learning an accurate-enough model of the base distribution such that anomaly detection works reliably is very challenging, as many researchers have noted in recent years. Existing methods for the one-sample goodness-of-fit problem do not account for the fact that a model of the base distribution is learned. To address that gap, we offer a theoretically motivated approach to account for the density learning procedure. In particular, we propose training an ensemble of density models, considering data to be anomalous if the data is anomalous with respect to any member of the ensemble. We provide a theoretical justification for this approach, proving first that a test on typicality is a valid approach to the goodness-of-fit problem, and then proving that for a correctly constructed ensemble of models, the intersection of typical sets of the models lies in the interior of the typical set of the base distribution. We present our method in the context of an example on synthetic data in which the effects we consider can easily be seen. 
\end{abstract}

\section{Introduction} \label{sec:intro}
Machine learning models are inherently non-robust to distributional shift, and at no fault of the model necessarily. There is no reason to expect that models should perform well on data that is dissimilar to the data on which they were trained. Interestingly, despite the fact that researchers and practitioners have been able to train models that perform exceptionally well on a variety of challenging tasks, we are still bad at reliably predicting when those models will fail. This implies that not only will the models have undefined behavior on out-of-distribution data, we are unable to detect when the models are presented with out-of-distribution data. This poses a conundrum, since we wish to deploy our high-performing models, yet often we can't since they can potentially have unpredictable behavior at unpredictable instances. 

Since training a model that is robust to all possible distributional shifts the model might encounter is potentially impossible, a more modest approach might be to come up with ways for detecting out-of-distribution data. These detections could then act as an indication that the model might be incorrect. This ability to predict incorrectness can go a long way in making systems more reliable. 

The problem of detecting out-of-distribution data has a long history, and is formally known as the goodness-of-fit problem. Statisticians have proved bounds on the probability of detecting populations of out-of-distribution data, such as in \cite{barron1989uniformly} and \cite{balakrishnan2019hypothesis}. These type of bounds show that certain tests can be performed which are capable of discerning (with non-trivial probability) that populations of data sampled from distributions at least some positive distance away from the base distribution are anomalous. However, in order to perform the proposed tests, an explicit form of the probability density function (or probability mass function) describing the base distribution is needed. For most real-world data sets, this density is not known, and must be estimated. While there has been a lot of analysis on the ability to detect anomalous data, those analyses typically do not account for the fact that the base density for which the tests are designed is learned. 

Empirically, however, many researchers have noted that detecting anomalous data in high dimensions using learned densities is hard, even when using modern, powerful density estimators. For example, the researchers in \cite{nalisnick2018deep} and \cite{choi2018generative} both claim that state-of-the-art learned densities are not suitable for anomaly detection since they assign higher probability to some out-of-distribution data than the data on which the models were trained. The authors in \cite{nalisnick2019detecting} realized that such a phenomenon is actually not cause for alarm, and is even expected with high-dimensional distributions. Therefore they propose performing an anomaly detection test based on the typicality of data under the learned density instead of the likelihood of data under the learned density. However, in that work, the authors noted such a test still performed poorly in some cases. 

In this article, we investigate why goodness-of-fit tests are challenging when using learned densities. In particular, we analyze how the typical set of learned distributions relates to the typical set of the ground truth distribution. In order to do this, we work with synthetic distributions in which the probability density function is known exactly. 
The contributions we make are summarized as the following:
\begin{itemize}
\item We prove error rate bounds on the goodness-of-fit test based on testing for typicality with respect to the base distribution 
\item We prove theorems stating that the typical sets of any two distributions having sufficiently high KL divergence must have low intersection, and that distributions having low KL divergence must have non-zero intersection.
\item We use these theorems to motivate our proposed method for conservative goodness-of-fit testing. Specifically, we propose training an ensemble of models, and show that by taking the intersection of their typical sets, we can approximately recover the typical set of the ground truth distribution. We show that such an ensemble can exist and give sufficient conditions for its existence.
\item We demonstrate on synthetic data sets that the typical set of standard learned distributions and the ground-truth distribution often have low intersection, even when the class of densities from which we approximate the ground truth contains the ground-truth. We validate that our proposed method addresses this issue.
\end{itemize} 

\section{Preliminaries}
The setting we are concerned with is the following. Consider a continuous random variable $\rx \in \mathcal{X}$ with probability density function $p(\rx)$. Let $\rx^n := (\rx_1, \rx_2, ..., \rx_n)$ denote the collection of $n$ random variables $\rx$. Let $h_p$ denote the differential entropy of $p(\rx)$. For $\epsilon > 0$ and positive integer $n$, The typical set with respect to $p(\rx)$ is defined as the following:
\begin{equation} \label{defn:typical_set}
    T_\epsilon^{(n)} := \Big\{ \rx^n \in \mathcal{X}^n : \vert -\frac{1}{n} \log p(\rx^n) - h_p \vert < \epsilon \Big\}.
\end{equation}
In words, this is the set of sequences of samples whose average negative-log-probability is close to the differential entropy. This set has high probability under $p(\rx)$, implying that most sequences of data points sampled from $p(\rx)$ are contained in this set \citep{cover2012elements}.  

\subsection{Hypothesis Testing} \label{sec:hypothesis}
We are interested in determining the plausibility that some collection of samples $\tilde{\rx}^n \in \mathcal{X}^n$ were sampled i.i.d. according to $p(\rx)$. In hypothesis testing terminology, we are interested in determining which of two hypotheses are more probable:
\begin{itemize}
    \item \makebox[5cm]{The \textit{null-hypothesis},\hfill}  $\mathcal{H}_0 := \tilde{\rx}^n \sim p(\rx)$
    \item \makebox[5cm]{The \textit{alternative-hypothesis},\hfill}  $\mathcal{H}_1 := \tilde{\rx}^n \sim \hat{p}(\rx)$,
\end{itemize}
where here $\hat{p}(\rx)$ is any distribution such that $\KL(p\Vert\hat{p}) \geq d$.
If we define $A_n \subset \mathcal{X}^n$ to be the acceptance region for the null-hypothesis, or the set of all sequences $\rx^n$ such that $\mathcal{H}_0$ is deemed more probable, then we can define the probabilities of error as the following:
\begin{gather}
    \alpha_n = p^n(A_n^\mathsf{c}), \quad \beta_n = \max_{\tilde{p}: D(p\Vert\tilde{p})\geq d} \tilde{p}^n(A_n).
\end{gather}
Typically we desire tests that minimize $\beta_n$ for a fixed $\alpha_n$, or tests that minimize some linear combination of the two.

The authors in \cite{nalisnick2019detecting} propose a test procedure which accepts the null-hypothesis iff $\tilde{\rx}^n \in T_\epsilon^{(n)}$. In other words, a collection of points are determined to be anomalous with respect to $p(\rx)$ iff $\tilde{\rx}^n \not\in T_\epsilon^{(n)}$. In that work, they show that this test performs about as well as other tests such as the Student's t-test, the Kolmogorov-Smirnov test, the Maximum Mean Discrepancy test, and the Kernelized Stein Discrepancy test. We note that in those comparisons, all tests are performed with respect to a learned density as a proxy for the ground-truth density. 

In this work, we consider the same typicality-based test for accepting or rejecting the null hypothesis. We focus on this test since it allows for analysis into what happens when the proxy learned density for which the typical set is constructed is different from the ground-truth distribution $p(\rx)$. 
In order to validate this approach, we first prove that for a fixed $\alpha_n$, the typical set test achieves an error rate $\beta_n$ that is at worst, given by:

\begin{theorem} \label{theorem:error_rate}
    Let the region of acceptance for the typicality test be defined by the set $T_\epsilon^{(n)}(p(\rx))$. For $n$ sufficiently large, then $\alpha_n < \epsilon$, and 
    \begin{equation*}
        \beta_n < \max_{\tilde{p}: D(p\Vert\tilde{p})\geq d} e^{-n(\KL(\tilde{p} \Vert p) + h_{\tilde{p}} - h_p - 3\epsilon)} + 3\epsilon
    \end{equation*}
\end{theorem}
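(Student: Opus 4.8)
The statement has two halves, and the first is immediate. Since $-\tfrac1n\log p(\rx^n) = -\tfrac1n\sum_{i=1}^n\log p(\rx_i)$ is an average of i.i.d.\ terms whose mean is $h_p$, the weak law of large numbers gives $p^n\big(T_\epsilon^{(n)}\big)\to 1$ (this is just the AEP quoted after the definition of the typical set), so for $n$ large $\alpha_n = p^n\big((T_\epsilon^{(n)})^\mathsf{c}\big) < \epsilon$. For the $\beta_n$ bound it suffices to prove, for each fixed $\tilde p$ with $D(p\Vert\tilde p)\ge d$ and all large $n$, that $\tilde p^n\big(T_\epsilon^{(n)}(p)\big) < e^{-n(\KL(\tilde p\Vert p) + h_{\tilde p} - h_p - 3\epsilon)} + 3\epsilon$, and then take $\max$ over such $\tilde p$ on both sides (the $\max$ preserves the inequality).

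The plan for the per-$\tilde p$ bound is to intersect the acceptance region $T_\epsilon^{(n)}(p)$ with two further sets, each of which carries almost all of the $\tilde p$-mass: the typical set $T_\epsilon^{(n)}(\tilde p)$ of $\tilde p$ itself, and the relative-entropy-typical set $B_n := \big\{\rx^n : \big|\tfrac1n\log\tfrac{\tilde p^n(\rx^n)}{p^n(\rx^n)} - \KL(\tilde p\Vert p)\big| < \epsilon\big\}$. Under $\tilde p$ both have probability $\to 1$, again by the weak law applied to $-\tfrac1n\sum\log\tilde p(\rx_i)\to h_{\tilde p}$ and to $\tfrac1n\sum\log\tfrac{\tilde p(\rx_i)}{p(\rx_i)}\to\KL(\tilde p\Vert p)$ respectively, so up to an additive error $\tilde p^n\big((T_\epsilon^{(n)}(\tilde p))^\mathsf{c}\big) + \tilde p^n(B_n^\mathsf{c}) < 2\epsilon < 3\epsilon$ it is enough to bound $\tilde p^n(S)$ with $S := T_\epsilon^{(n)}(p)\cap T_\epsilon^{(n)}(\tilde p)\cap B_n$.

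On $S$ we have the pointwise estimates $e^{-n(h_p+\epsilon)} < p^n(\rx^n)$, $\ \tilde p^n(\rx^n) < e^{-n(h_{\tilde p}-\epsilon)}$, and $\ p^n(\rx^n) < e^{-n(\KL(\tilde p\Vert p)-\epsilon)}\,\tilde p^n(\rx^n)$ (the last from membership in $B_n$). First, $\tilde p^n(S)\le e^{-n(h_{\tilde p}-\epsilon)}\,\mathrm{Vol}(S)$. Second, the lower bound on $p^n$ over $S$ gives $\mathrm{Vol}(S)\le e^{n(h_p+\epsilon)}\,p^n(S)$. Third, $p^n(S)\le p^n(B_n) = \int_{B_n}p^n \le e^{-n(\KL(\tilde p\Vert p)-\epsilon)}\int_{B_n}\tilde p^n \le e^{-n(\KL(\tilde p\Vert p)-\epsilon)}$. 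Chaining these three yields $\tilde p^n(S)\le e^{-n(h_{\tilde p}-\epsilon)}\,e^{n(h_p+\epsilon)}\,e^{-n(\KL(\tilde p\Vert p)-\epsilon)} = e^{-n(\KL(\tilde p\Vert p) + h_{\tilde p} - h_p - 3\epsilon)}$, which is exactly the target exponent, and adding back the $O(\epsilon)$ error terms gives the claim. (The identity $\KL(\tilde p\Vert p) + h_{\tilde p} = \E_{\tilde p}[-\log p]$ is what makes the exponent natural: it is the gap between the cross-entropy of $p$ under $\tilde p$ and the entropy $h_p$.)

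The step I expect to be the real obstacle is not any single estimate but the \emph{uniformity} of ``$n$ sufficiently large.'' The convergence rates in the weak-law statements for $T_\epsilon^{(n)}(\tilde p)$ and for $B_n$ depend on $\tilde p$ — through the variances of $\log\tilde p(\rx)$ and of $\log\tfrac{\tilde p(\rx)}{p(\rx)}$ under $\tilde p$ — and these need not be bounded uniformly over the whole competing family $\{\tilde p : \KL(p\Vert\tilde p)\ge d\}$, so a single threshold $n_0$ need not serve every alternative. I would handle this either by restricting to a subfamily with uniformly controlled such moments (the practically relevant regime), or by reading the theorem pointwise in $\tilde p$, which is the interpretation under which the $\max$-over-$\tilde p$ formulation is cleanest. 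The remaining care is bookkeeping in the degenerate cases $\KL(\tilde p\Vert p) = \infty$ or $h_{\tilde p} = -\infty$, where the exponent is read in the extended reals and the bound is trivial.
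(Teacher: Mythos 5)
Your proof is correct and is essentially the paper's own argument: the paper isolates the per-$\tilde p$ bound as Lemma~\ref{lemma:0}, proved by contradiction using exactly your three sets ($T_\epsilon^{(n)}(p)$, $T_\epsilon^{(n)}(\tilde p)$, and the relative-entropy typical set) and the same chain of probability/volume estimates culminating in the $e^{-n(D-\epsilon)}$ bound on the relative-entropy typical set, and then takes the max over $\tilde p$. Your direct (rather than contrapositive) phrasing, the slightly tighter $2\epsilon$ additive term, and your caveat about uniformity of ``$n$ sufficiently large'' over the alternative class are the only differences, and none changes the substance.
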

\begin{proof} [Proof of Theorem \ref{theorem:error_rate}]
See Section \ref{sec:proofs}.
\end{proof}
This theorem states that for a fixed rate of correctly accepting $\mathcal{H}_0$, the test for typicality will fail to detect anomalous data at a rate no greater than the bound given.

\subsection{Density Learning as Optimization} \label{sec:densitylearning}
In order to perform the test on typicality to determine whether a sequence of data points are anomalous or not, a model of the probability density function $p(\rx)$ is needed. For many practical applications, we are only ever presented with a data set of $N$ samples from $p(\rx)$, which we denote $\bar{X} = (\bar{\rx}_1, \bar{\rx}_2,... \bar{\rx}_N)$. However, an estimate of the ground-truth density can be learned. We denote a parameterized distribution by $q(\rx; \vtheta)$, which can be learned by minimizing the following objective:
\begin{equation} \label{kl_opt}
    \min_{\vtheta} \KL(p(\rx) \Vert q(\rx; \vtheta)).
\end{equation}
The KL divergence is defined by $\KL(p(\rx)\Vert q(\rx;\theta)) := \E_p[log \frac{p(\rx)}{q(\rx;\theta)}]$.
By expanding terms and eliminating constants, it is clear that an equivalent optimization problem is 
\begin{equation} \label{max_likelihood}
    \max_{\vtheta} \E_p[log \ q(\rx; \vtheta)] \approx \max_{\vtheta} \frac{1}{N}\sum_{i=1}^N log \ q(\bar{\rx}_i; \vtheta).
\end{equation}
The optimal $\vtheta$ in this problem is that which maximizes the log likelihood on the data set $\bar{X}$. This is why this optimization is commonly referred to as maximum-likelihood estimation. Importantly, the dependence on the unknown true distribution $p(\rx)$ was removed through the use of the law of large numbers. Since the KL-divergence is equal to zero if and only if $p(\rx) = q(\rx;\vtheta) \ \forall x\in\mathcal{X}$, this objective is well-motivated if we wish that $q(\rx;\vtheta)\approx p(\rx)$. However, there are many other statistical divergences between distributions that would also be suitable as objectives in the optimizations above. In fact, many of these other divergences might be preferred, especially in the context of anomaly detection. Among other reasons, this is because when the parameterized class of distributions $q(\rx;\vtheta)$ is highly expressive, the optimizations are non-convex, and local optima may be found with sub-optimal properties. This can be true even if there exist some $\vtheta^*$ such that $q(\rx;\vtheta^*) = p(\rx)$. 

For example, by looking at the form of the KL-divergence, there is no direct penalty for $q(\rx;\vtheta)$ in assigning a high density to points far away from any of the $\bar{\rx}_i$'s. This can lead to locally optimal learned densities which have high probability density in regions of $\mathcal{X}$ that $p(x)$ does not. This can potentially lead to situations in which a collection of points can be anomalous with respect to $p(\rx)$, but in-distribution with respect to the sub-optimally learned $q(\rx;\vtheta)$.

There exist other divergences which do not suffer this effect, but it is not obvious how to optimize with respect to them. For example, the \textit{reverse} KL-divergence, which switches the places of $p(\rx)$ and $q(\rx;\vtheta)$ in the standard KL-divergence, has an alternate effect; $q(\rx;\vtheta)$ is directly penalized for assigning high density to anywhere that does not also have high density under $p(\rx)$. Unfortunately, this requires direct knowledge of $p(\rx)$ to evaluate the objective, as do all divergences other than the forward KL, to the best of our knowledge. This makes the forward KL-divergence special in that it is the only divergence which can directly be optimized for. 

These implications are important when considering how learning a density affects the hypothesis testing problem. 

\subsection{Modern Density Parameterizations} \label{sec:params}

There have been many advancements in the parameterization of density models in recent years. Broadly speaking, there are two main classes of parameterizations, being latent-variable models and invertible flow models. Latent variable models, such as Variational Auto-encoders \citep{rezende2014stochastic, kingma2013auto}, refer to models which map a lower-dimensional, latent random variable through a probabilistic mapping into data space. Because the image of any non-surjective function necessarily has measure zero, these type of models cannot have deterministic mappings, or else the model cannot constitute a valid probability distribution. The probabilistic mapping corrects for this, and the ability to represent data in a lower-dimensional latent space is a nice property of such models. 

The other major class of parameterizations are those which map a probability density function through a bijective, continuously differentiable mapping, which we refer to as change-of-variable models. These models include autoregressive models, such as PixelCNN \citep{oord2016pixel, salimans2017pixelcnn++}, and invertible flow models, such as NICE \citep{dinh2014nice}, RealNVP \citep{dinh2016density}, and GLOW \citep{kingma2018glow}. Often autoregressive models are considered as a separate class of model than invertible flow models, but we lump them together as they are effectively different implementations of the same concept. All change-of-variable models operate on the ability to express a probability density function through a change of variables. Specifically, if $m(\rx;\vtheta) : \mathcal{X} \to \mathcal{Z}$ is a continuously differentiable bijection, and $r(\rz; \vtheta)$ is a probability density function corresponding to random variable $\rz\in\mathcal{Z}$, then together $m$ and $r$ implicitly define a probability density function over random variable $\rx \in \mathcal{X}$:
\begin{equation} \label{change_of_variables}
    q(\rx; \vtheta) = r(\rz; \vtheta)\begin{vmatrix} \frac{dm(\rx; \vtheta )}{d\rx} \end{vmatrix}.
\end{equation}

So long as the determinant in (\ref{change_of_variables}) is easy to evaluate, the probability density $q(\rx; \vtheta)$ can be evaluated and the parameters $\vtheta$ can easily be optimized over. 
In both of these broad class of parameterizations, there have been many clever implementations of these concepts, creating highly expressive density models. In this work, our considerations are tangential and complementary to the advancements in these parameterizations. This is because regardless of the parameterization used, all of these models rely on optimizing the forward KL-divergence in order to learn their parameters. The effects we study in this paper occur even when the parameterization includes the ground-truth density we wish to learn, indicating that advancements in the expressivity of the models are unlikely to fix the undesired effects.

\section{Case Study on Synthetic Data} \label{sec:mixture}

\begin{figure}[h] 
\begin{center}
\includegraphics[width=1.0\textwidth]{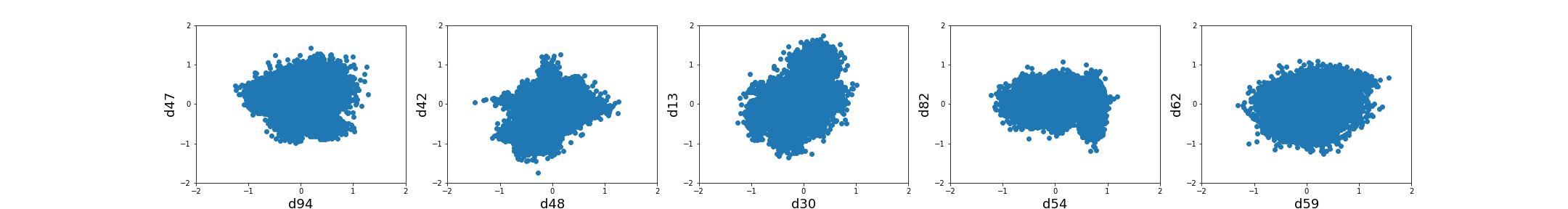}
\end{center}
\caption{Random projections of samples from a 100-dimensional Mixture of Gaussians Distribution.}
\label{fig:base_dist}
\end{figure}

To understand why hypothesis testing is so difficult when using learned densities, we investigate this occurrence by learning synthetic probability distributions for which the ground-truth $p(\rx)$ is known. In particular, we consider the problem of learning the density of a high-dimensional mixture of Gaussians. 

Let $M$ denote the number of mixture components in the base distribution. Then we define the base distribution as the following:
\begin{equation}
    p(\rx) := \sum_{m=1}^M \pi_m p_m(\rx; \vmu_m, \mSigma_m),
\end{equation}
where $p_m(\rx; \vmu_m, \mSigma_m)$ is the probability density function associated with a Gaussian distribution with mean $\vmu_m$ and covariance $\mSigma_m$. For this investigation, we define the space $\mathcal{X} := \mathbb{R}^{100}$, so that effects dependent on the dimensionality of $\mathcal{X}$ are evident. Here we consider the parameters $\vmu_m$ and $\mSigma_m$ to be chosen such that the $\vmu_m$ are sampled from a uniform distribution over a hyper-ball with radius $r=3.0$, and the matrices $\mSigma_m$ are diagonal matrices with the negative logarithm of the diagonal elements each sampled from a uniform distribution between 1.0 and 3.0. We fix the $\pi_m = 1/M$. In all experiments we choose the number of components to be $M=20$.

Some random projections of samples from one such $p(\rx)$ can be seen in Figure \ref{fig:base_dist}. Note that although in each projection the probability mass of the different mixture components seem to be overlapping, due to the dimensionality of the space, each mode is separated by a relatively large distance. In the example shown in Figure \ref{fig:base_dist}, the minimum distance between the means of each mixture component is 3.62. Furthermore, the max probability of any mode with respect to any of the other mixture components is roughly $1e$-$20$. This means that each of the $M$ modes in the mixture distribution are clearly distinct and separated by regions of near-zero probability. 

\begin{figure}[h] 
\begin{center}
\includegraphics[width=0.5\textwidth]{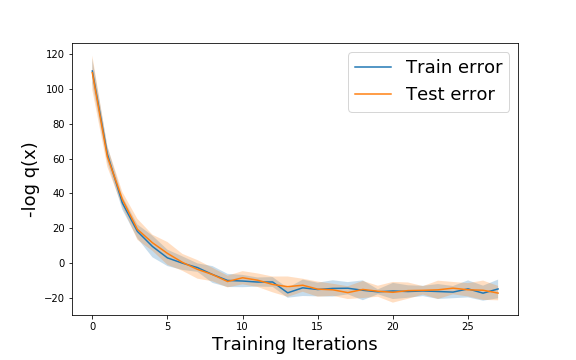}
\end{center}
\caption{Learning curve for the Mixture of Gaussian Experiment. Average (across experiments) train and test neg-log-likelihoods are plotted along with their $\pm1$ standard deviation region.}
\label{fig:train}
\end{figure}

We attempt to learn $p(\rx)$ by optimizing (\ref{max_likelihood}), for a $q(\rx; \vtheta)$ parameterized \emph{exactly} as $p(\rx)$. In other words, the parameters $\vtheta$ are the set $\{\pi_1, \vmu_1, \mSigma_1, \pi_2, \vmu_2, \mSigma_2,...,\pi_M, \vmu_M, \mSigma_M\}$, where the $\pi_m > 0$ are constrained to sum to 1, and the $\mSigma_m$ are constrained to be positive definite and diagonal.   

\begin{figure}[h] 
\begin{center}
\includegraphics[width=1.0\textwidth]{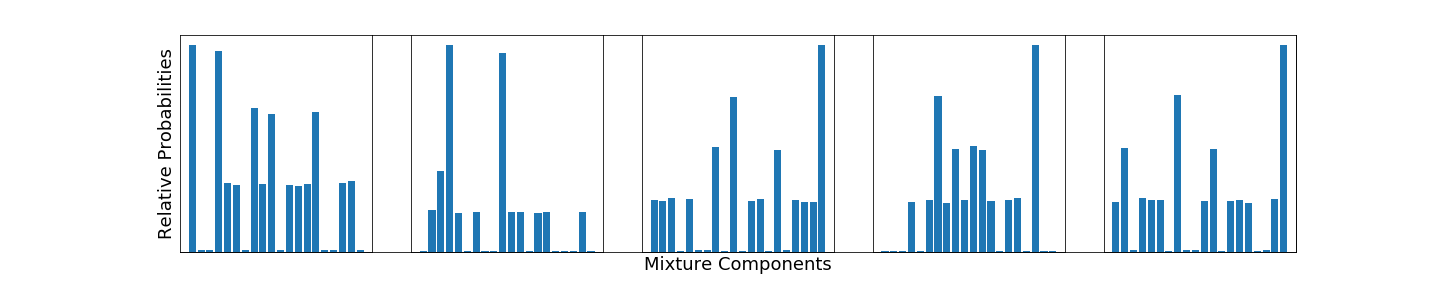}
\end{center}
\caption{Bar plots showing the relative weighting of the mixture components in the learned densities in the Mixture of Gaussians experiment. Here it can be seen that each of the 5 learned densities use only approximately 12 out of the 20 modes to explain the data.}
\label{fig:barplot}
\end{figure}

We perform five experiments, each time randomly initializing the parameters $\vtheta$ exactly as was done for $p(\rx)$. We use the indices $k \in \{1,...,K=5\}$ to index the 5 experiments ran, and refer the the $k$-th learned density as $q_k(\rx; \vtheta_k)$. The training curve for learning these densities can be seen in Figure \ref{fig:train}, which shows the average test and train errors across experiments, and random projections of the resulting samples overlaid on samples of the base distribution can be seen in Figure \ref{fig:learned_samples}. Examining these figures, we see that the learning procedure converged, yet the samples generated from the learned distributions clearly differ from the ground-truth samples. By examining the relative probabilities of each mixture component for the learned densities (Figure \ref{fig:barplot}), we see that all of the learned densities attempt to represent the $M=20$ modes of the base distribution using only about 12 of their 20 modes.

Recalling that the base distribution $p(\rx)$ has distinctly separated modes, the fact that the base distributions represent the data using fewer modes directly implies that the models necessarily assign high density to regions in-between modes of the ground-truth distribution. This behavior is very troublesome for the purposes of anomaly detection, since intuitively these regions of high-density in-between modes of the true distribution effectively include those regions in the typical set of the learned density. This means that data-points lying in these regions, which should be considered anomalous with respect to the true distribution, are not detectable by hypothesis tests which test for typicality (or likelihood, for that matter). 

\begin{figure}[h] 
\begin{center}
\includegraphics[width=0.7\textwidth]{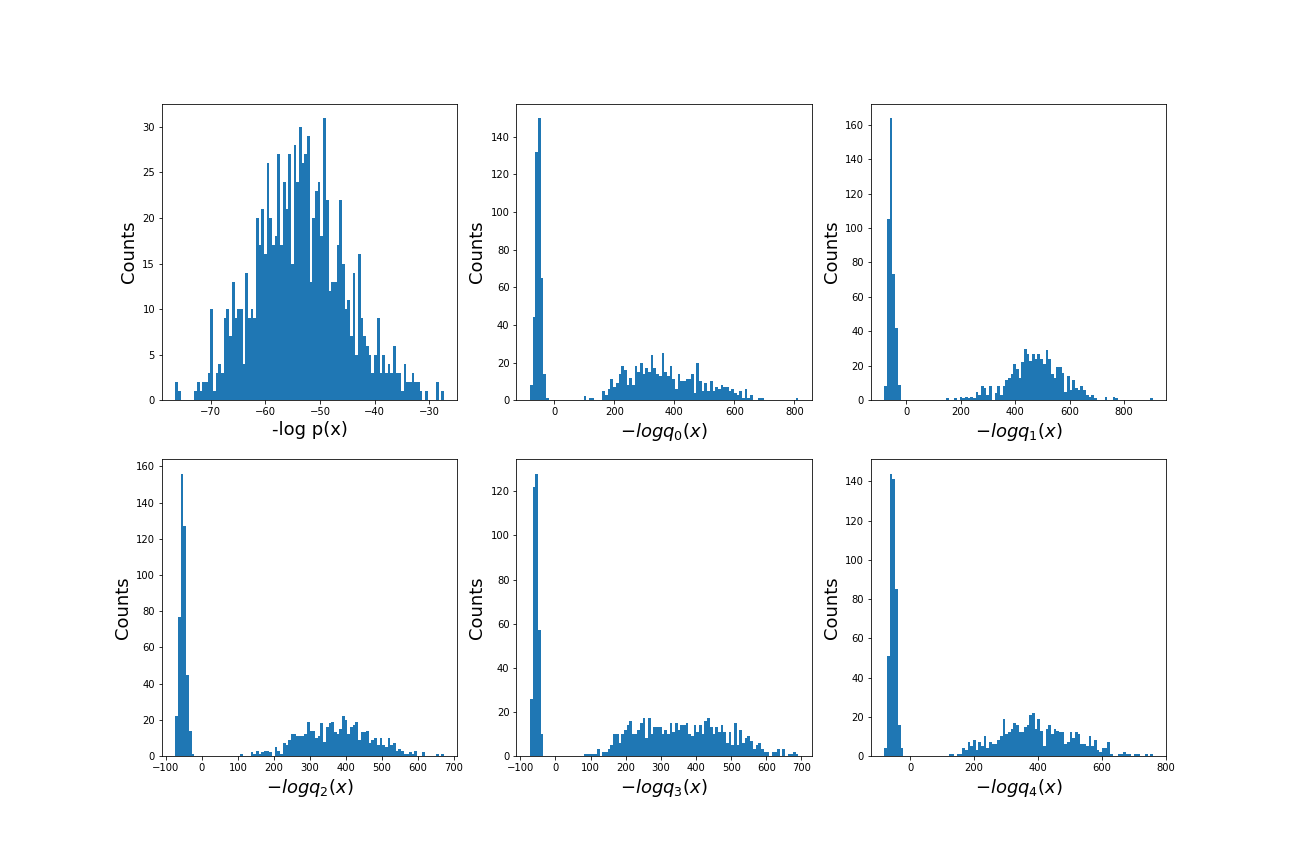}
\end{center}
\caption{Histograms of the negative log-likelihood of the ground truth distribution, evaluated on samples from the ground-truth distribution (top-left), and samples from each of the learned distributions in the Mixture of Gaussians experiment. As it can be seen, only about a third of samples from each learned distribution have negative log-likelihood close to the ground-truth sample average.}
\label{fig:nllhist}
\end{figure}

To validate this claim, we generate histograms of the ground-truth negative log-likelihoods on samples from the base distribution and each of the learned distributions, which can be seen in Figure \ref{fig:nllhist}. We also list in Table \ref{table:typicals1} the percentages of samples from each learned density that lie in the typical sets of the ground-truth distribution and every other learned distribution. These statistics give a sense of the intersection of the typical sets of the learned and ground-truth densities. In particular, despite almost all of the ground-truth samples lying in the typical set of each learned-distribution, only about $40\%$ of the samples from each learned distribution lie in the typical set of the true distribution. In the context of anomaly detection, this means that if we used a test for typicality using one of these learned densities, about $60\%$ of the region of acceptance would include points that are not actually typical with respect to the base distribution. 

The mismatch in typical sets corroborates what was stated in Section \ref{sec:densitylearning}, which was that the forward KL-divergence objective can lead to assigning density to regions far from any data points. Here, the learned densities are optimized to explain the data. The way in which they do results in inadvertently explaining other regions of the data space as well.

\begin{table}[t]
\caption{Percentages of samples lying in the Typical Set of each of the learned and ground-truth distributions. Here, we take $\epsilon$ to be such that approximately $95\%$ of samples from a distribution lie in its own typical set, and we set $n=1$. Note that each of the learned densities include most of the ground-truth samples from $p(\rx)$ in their typical sets, but the typical set of the ground-truth distribution excludes most samples from the learned densities. Furthermore, most of the samples from the learned densities are excluded from the typical sets of other learned densities. }
\label{table:typicals1}
\begin{center}
\begin{tabular}{ |p{2.5cm}||p{1.2cm}|p{1.2cm}|p{1.2cm}|p{1.2cm}|p{1.2cm}|p{1.2cm}|  }
 \hline
  & \multicolumn{6}{|c|}{Sampling Distribution} \\
 \hline
  & $p(\rx)$ & $q_1(\rx;\vtheta_1)$ & $q_2(\rx;\vtheta_2)$ & $q_3(\rx;\vtheta_3)$ & $q_4(\rx;\vtheta_4)$ & $q_5(\rx;\vtheta_5)$ \\
 \hline
$T_\epsilon^{(n)}( p(\rx))$  & 95.3 & 37.8 & 34.4 & 42.9 & 35.0 & 40.7 \\
 $T_\epsilon^{(n)}(q_1(\rx;\vtheta_1))$  & 93.3 & 93.4 & 35.1 & 41.2 & 34.3 & 38.9\\
 $T_\epsilon^{(n)}(q_2(\rx;\vtheta_2))$ & 93.2 & 69.6 & 95.0& 80.6 & 68.9 & 73.0 \\
  $T_\epsilon^{(n)}(q_3(\rx;\vtheta_3))$ & 96.2 & 47.0 & 33.9 & 97.0 & 40.5 & 47.4  \\
 $T_\epsilon^{(n)}(q_4(\rx;\vtheta_4))$ & 94.5 & 47.4 & 44.5 & 44.8 & 94.5 & 50.4 \\
  $T_\epsilon^{(n)}(q_5(\rx;\vtheta_5))$ & 95.7 & 46.6 & 39.9 & 48.9 & 38.2 & 93.9\\
 \hline
\end{tabular}
\end{center}
\end{table}

The example presented here is admittedly a simple one, yet it demonstrates that coming up with more clever parameterizations will not necessarily result in a better learning process. In this example, the learned densities are expressive enough to learn the true distribution, however get stuck in poor local minima due to bad initializations. This is the case in this situation, even though we initialized the learned distributions according to the same distribution over parameters that the true distribution was initialized from. For real data sets, it is unlikely that the parameterized space of densities we learn over can ever represent the density we are trying to estimate, and even if it could, finding a proper initialization might be exceptionally difficult. Therefore, the effects illustrated in this example are expected to occur in more complicated (and interestig) domains, and should be accounted for in those domains as well.

In order to address this issue of over-assignment of density, we turn to ensembles of learned distributions. As can be seen in Table \ref{table:typicals1}, we find that due to the random initializations of each of the learned distributions, the regions in which they over-assign density have low intersection. In the following section, we demonstrate how this property can be leveraged to account for the mismatch in typical sets between learned and target densities.

\section{Typical Sets of Ensembles of Learned Densities}
We propose an alternative test to discern the two hypotheses described in Section \ref{sec:hypothesis}. First we define what we call the multi-typical set, with respect to an ensemble of distributions $q_k(\rx)$, $k\in\{1,...,K\}$. For a given $\epsilon > 0$ and positive integer $n$, this set is defined as the following:
\begin{equation} \label{multi-typical}
    T_\epsilon^{(n)}(\{q_1(\rx), ..., q_K(\rx)\}) := \Big\{ \rx^n \in \mathcal{X}^n : \max_{k\in\{1,...,K\}} \vert -\frac{1}{n}\log q_k(\rx^n) - h_k \vert < \epsilon   
    \Big\}.
\end{equation}
Here, $h_k$ refers to the differential entropy of the $k$-th density in the ensemble, $q_k(\rx)$. 

The method we propose is the following:
\begin{enumerate}
    \item Train an ensemble of parameterized densities, $\{q_1(\rx;\vtheta_1,...,q_K(\rx;\vtheta_K)\}$, using random initializations and using the maximum likelihood objective (\ref{max_likelihood}). 
    \item When discerning between $\mathcal{H}_0$ and $\mathcal{H}_1$ (Section \ref{sec:hypothesis}), we choose $\mathcal{H}_0$ iff $\tilde{\rx}^n \in T_\epsilon^{(n)}(\{q_1(\rx;\vtheta_1),...,q_K(\rx;\vtheta_K)\}).$ 
\end{enumerate}

Before justifying this test theoretically, we first demonstrate its utility on the Mixture of Gaussians example in Section \ref{sec:mixture}. By using rejection sampling, we generate samples that lie in the multi-typical set with respect to the 5 learned densities. Specifically, we generate 1000 samples from each model, and then reject any of the resulting 5000 samples that do not lie in the typical set of every model in the ensemble. The result of this is that $39.9\%$ of the total samples make it through the rejection process, and $94.8\%$ of the remaining samples lie within the typical set of the ground-truth distribution. Furthermore, $96.6\%$ of the ground-truth samples are considered typical with respect to the multi-typical set (\ref{multi-typical}). We emphasize that this sampling procedure is a proxy for estimating the interior of the multi-typical set. The fact that about as many of these samples lie in the ground-truth typical set as samples from the ground-truth distribution itself demonstrates that this set works as a good indicator of typicality with respect to the ground-truth distribution. 

Therefore, by leveraging the use of an ensemble of learned densities, we have shown that the shortcoming of any individual learned density model can be overcome, at least in this motivating example. 

The high-level idea for why this method works is that by sufficiently minimizing the forward KL-divergence between the ground-truth distribution and each of the learned distributions, there is guaranteed to be some intersection of the typical sets of each model in the ensemble, and at least part of that intersection must lie in the typical set of the ground-truth typical set. Furthermore, we prove that if each model in the ensemble is sufficiently different, such that the KL-divergence between each model is large enough, then the intersection between their typical sets must be small. This implies that the intersection of typical sets from sufficiently different models trained to have low KL-divergence with the ground-truth model must lie almost entirely in the interior of the typical set of the ground-truth distribution. 

To formalize this argument, we state and prove the following theorems:

\begin{theorem} \label{theorem:intersection}
    For continuous random variable $\rx \in \mathcal{X}$, Consider some distribution $p(\rx)$ with differential entropy $h_p$, and any collection of distributions $q_k(\rx)$, $k\in\{1,...,K\}$, all having $\KL(p \Vert q_k) = d_k < \infty.$ Let $T_p := T_\epsilon^{(n)}(p)$ denote the typical set of $p$, and similarly $T_{q_k} := T_\epsilon^{(n)}(q_k)$ the typical set of $q_k$ for each $k\in\{1,...,K\}$. \\
    
    For $n$ sufficiently large, if the following hold,
    \begin{itemize}
        \item $\frac{1}{4\epsilon}\log \big(\frac{1-2\epsilon}{\frac{K-1}{K} + \epsilon}\big) > n$
        \item $d_k < \frac{1}{n}\log \Big( \frac{1-2\epsilon}{\epsilon}e^{-4n\epsilon} - \frac{K-1}{\epsilon K} \Big) + 3\epsilon$
    \end{itemize} 
    then \begin{equation*}
        Vol\Big(T_p \bigcap_k T_{q_k}\Big) > 0.
        \end{equation*}
\end{theorem}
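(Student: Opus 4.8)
My plan is to convert the volume statement into a probability statement and then run a union bound. Since $p$ and each $q_k$ possess densities, the measures $p^n$ and $q_k^n$ are absolutely continuous with respect to Lebesgue measure on $\mathcal{X}^n$; hence any measurable set with positive $p^n$-mass has strictly positive volume, and it suffices to prove $p^n\big(T_p\cap\bigcap_k T_{q_k}\big)>0$. By the union bound applied to the complements,
\begin{equation*}
p^n\Big(T_p\cap\bigcap_{k=1}^K T_{q_k}\Big)\ \geq\ 1 - p^n(T_p^{\mathsf c}) - \sum_{k=1}^K p^n(T_{q_k}^{\mathsf c}),
\end{equation*}
so the whole theorem reduces to showing that the three contributions on the right leave a positive remainder under the stated hypotheses.

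The term $p^n(T_p^{\mathsf c})$ is handled for free: it is the quantity $\alpha_n$ of Theorem~\ref{theorem:error_rate}, and the asymptotic equipartition property (the weak law of large numbers for $-\tfrac1n\log p(\rx^n)$) gives $p^n(T_p^{\mathsf c})<\epsilon$ once $n$ is large — this is the role of the ``$n$ sufficiently large'' clause. The work is in bounding $p^n(T_{q_k}^{\mathsf c})$ by something controlled by $d_k=\KL(p\Vert q_k)$. I would start from the fact that, by the AEP applied to $q_k$, the set $T_{q_k}$ has $q_k^n$-probability at least $1-\epsilon$ for $n$ large, so $T_{q_k}^{\mathsf c}$ is $q_k^n$-small; then apply the data-processing inequality to the two-cell partition $\{T_{q_k},T_{q_k}^{\mathsf c}\}$, which gives $n\,d_k=\KL(p^n\Vert q_k^n)\geq \KL\!\big(p^n(T_{q_k}^{\mathsf c})\,\Vert\,q_k^n(T_{q_k}^{\mathsf c})\big)$ (binary relative entropy), and solve this scalar inequality for $p^n(T_{q_k}^{\mathsf c})$. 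This produces a bound of the form $p^n(T_{q_k}^{\mathsf c})\leq g(n,\epsilon,d_k)$ with $g$ monotone in $d_k$; the $\epsilon$-slab in the definition of the typical set together with the volume estimate $\mathrm{Vol}(T_{q_k})\leq e^{n(h_k+\epsilon)}$ is what introduces the factors $e^{-4n\epsilon}$ and $\tfrac{1-2\epsilon}{\epsilon}$ and the $3\epsilon$ offset, and summing over the remaining $K-1$ models introduces the $\tfrac{K-1}{\epsilon K}$ term. Imposing $\sum_k g(n,\epsilon,d_k)<1-\epsilon$ then unpacks into the second displayed hypothesis, while the first displayed hypothesis is exactly what forces the argument of the logarithm in that bound to stay positive — which is why, atypically, it appears as an \emph{upper} bound on $n$.

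The step I expect to be the real obstacle is the passage from $d_k$ to a usable bound on $p^n(T_{q_k}^{\mathsf c})$. The location of $T_{q_k}$ is pinned by $h_k$, but under $p^n$ the statistic $-\tfrac1n\log q_k(\rx^n)$ concentrates about $h_p+d_k$, not about $h_k$; so making $p^n(T_{q_k})$ large tacitly requires $h_k$ and $h_p+d_k$ to be close, and that has to be squeezed out of the smallness of $d_k$ enforced by the hypotheses rather than read off from any single inequality. I would also watch the joint feasibility of ``$n$ large enough for both AEP statements'' and the first hypothesis's cap on $n$: the argument only says anything on the band of $n$ where the two coexist, and verifying that this band is nonempty in the regime of $\epsilon$ and $K$ of interest is part of the proof, not a triviality.
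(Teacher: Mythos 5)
Your reduction to $p^n$-probabilities and a union bound over the complements $T_{q_k}^{\mathsf c}$ is not the paper's route, and as planned it cannot close under the stated hypotheses. The paper never needs $p^n(T_{q_k})$ to be close to $1$. Instead it bounds each \emph{pairwise} intersection in volume: from the relative-entropy typical set one gets $q_k^n(T_p)\geq(1-2\epsilon)e^{-n(d_k+\epsilon)}$ (Lemma~\ref{lemma:1}), a union bound gives $q_k^n(T_{q_k}\cap T_p)\geq(1-2\epsilon)e^{-n(d_k+\epsilon)}-\epsilon$, and converting this $q_k$-mass to Lebesgue volume via the density bounds on the typical sets yields Lemma~\ref{lemma:2}, hence $Vol(T_p\cap T_{q_k})/Vol(T_p)>(K-1)/K$ under the two hypotheses. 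The final step is a union bound on \emph{volumes normalized by} $Vol(T_p)$ (Lemma~\ref{lemma:3}): each $T_p\setminus T_{q_k}$ removes strictly less than $\tfrac1K Vol(T_p)$, so the $K$-fold intersection retains positive volume. This margin is razor-thin, and that is precisely why your probability version fails: translating a volume fraction of $T_p$ into a $p^n$-probability costs a factor between $e^{-2n\epsilon}$ and $e^{2n\epsilon}$, so even granting the paper's bound one only gets $p^n(T_{q_k}^{\mathsf c})\lesssim 1-\tfrac{K-1}{K}(1-\epsilon)e^{-2n\epsilon}$, whose sum over $k$ exceeds $1-\epsilon$; your inequality $p^n(\cap)\geq 1-p^n(T_p^{\mathsf c})-\sum_k p^n(T_{q_k}^{\mathsf c})$ then has a negative right-hand side and proves nothing.

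The specific mechanism you propose for the hard step also does not deliver the stated theorem. The data-processing inequality on the partition $\{T_{q_k},T_{q_k}^{\mathsf c}\}$ gives, after the standard manipulation, $p^n(T_{q_k}^{\mathsf c})\leq\big(n d_k+\log 2\big)/\log\!\big(1/q_k^n(T_{q_k}^{\mathsf c})\big)\leq\big(n d_k+\log 2\big)/\log(1/\epsilon)$. But the theorem's second hypothesis permits $n d_k$ as large as roughly $\log\!\big(\tfrac{1-2\epsilon}{\epsilon}\big)+3n\epsilon$, for which this bound exceeds $1$ and is vacuous; to make your union bound close you would need something like $d_k\lesssim\log(1/\epsilon)/(nK)$, i.e.\ a genuinely different and stronger hypothesis. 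Relatedly, the obstacle you flag yourself --- that under $p^n$ the statistic $-\tfrac1n\log q_k(\rx^n)$ concentrates near $h_p+d_k$ while $T_{q_k}$ is centered at $h_k$ --- is real and is \emph{not} ``squeezed out of the smallness of $d_k$'': differential entropy is not controlled by KL divergence, so no scalar inequality in $d_k$ alone forces $h_k\approx h_p+d_k$. The paper's proof is structured exactly to avoid ever needing that fact as an intermediate step, which is the key idea your plan is missing.
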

\begin{theorem} \label{theorem:reverse}
    For continuous random variable $\rx\in\mathcal{X}$, consider two distributions, $q_a(\rx)$ and $q_b(\rx)$. Denote $h_a$ and $h_b$, the differential entropy of these distributions, respectively. Similarly, let $T_a := T_\epsilon^{(n)}(q_a)$ and $T_b := T_\epsilon^{(n)}(q_b)$ represent the typical sets of $q_a$ and $q_b$. If for some $0 < r\leq1$, and $n$ sufficiently large,
    \begin{equation*}
        \KL(q_a \Vert q_b) > h_b - h_a - 3\epsilon + \frac{1}{n}\log \Big( \frac{1}{r(1-\epsilon)e^{-2n\epsilon}-2\epsilon}\Big),
    \end{equation*}
    then,
    \begin{equation*}
        \frac{Vol\big(T_a \cap T_b\big)}{Vol\big(T_a\big)} < r.
    \end{equation*}
\end{theorem}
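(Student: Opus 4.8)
The plan is to follow the blueprint of Theorem~\ref{theorem:error_rate}: turn the numerator $Vol(T_a\cap T_b)$ into a probability, bound the denominator $Vol(T_a)$ from below by the standard equipartition volume estimate, and control the resulting probability through $\KL(q_a\Vert q_b)$ by a Stein-type argument. Throughout I would use three facts about $T_\epsilon^{(n)}(q)$ that hold for $n$ sufficiently large: (i) $e^{-n(h_q+\epsilon)} < q(\rx^n) < e^{-n(h_q-\epsilon)}$ for every $\rx^n$ in the set; (ii) $Vol(T_\epsilon^{(n)}(q)) \geq (1-\epsilon)e^{n(h_q-\epsilon)}$; (iii) $q^n(T_\epsilon^{(n)}(q)) > 1-\epsilon$.

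First I would reduce to a probability bound. Since $T_a\cap T_b\subseteq T_a$, fact (i) gives $q_a(\rx^n) > e^{-n(h_a+\epsilon)}$ on $T_a\cap T_b$, hence
\[
Vol(T_a\cap T_b) = \int_{T_a\cap T_b} d\rx^n \;\leq\; e^{n(h_a+\epsilon)}\, q_a^n(T_a\cap T_b) \;\leq\; e^{n(h_a+\epsilon)}\, q_a^n(T_b),
\]
and fact (ii) applied to $T_a$ gives $Vol(T_a) \geq (1-\epsilon)e^{n(h_a-\epsilon)}$. Thus $\frac{Vol(T_a\cap T_b)}{Vol(T_a)} < r$ holds as soon as $q_a^n(T_b) < r(1-\epsilon)e^{-2n\epsilon}$. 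Clearing the logarithm in the hypothesis shows that the assumed lower bound on $\KL(q_a\Vert q_b)$ is equivalent to $r(1-\epsilon)e^{-2n\epsilon} - 2\epsilon > e^{-n(\KL(q_a\Vert q_b)+h_a-h_b\pm 3\epsilon)}$, so everything comes down to a Stein-type estimate of the shape $q_a^n(T_b) \leq e^{-n(\KL(q_a\Vert q_b)+h_a-h_b+O(\epsilon))} + O(\epsilon)$ — exactly the kind of bound proved in Theorem~\ref{theorem:error_rate} (there $q_a^n(T_b)$ plays the role of a type-II error for the typicality test whose base distribution is $q_b$).

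To prove that estimate I would intersect $T_b$ with the jointly typical set
\[
G := \Big\{ \rx^n : \big| -\tfrac1n\log q_a(\rx^n) - h_a \big| < \epsilon, \ \big| \tfrac1n\log\tfrac{q_a(\rx^n)}{q_b(\rx^n)} - \KL(q_a\Vert q_b) \big| < \epsilon \Big\},
\]
for which $q_a^n(G^{\mathsf c}) < 2\epsilon$ once $n$ is large, by the weak law of large numbers applied to the two averages defining $G$. On $T_b\cap G$ one has $q_a(\rx^n) < e^{-n(h_a-\epsilon)}$ from (i); and $Vol(T_b\cap G)$ is controlled by writing $1 < e^{n(h_b+\epsilon)} q_b(\rx^n)$ on $T_b$ and then using a Markov inequality $q_b^n\big(\tfrac1n\log\tfrac{q_a}{q_b} > \KL(q_a\Vert q_b)-\epsilon\big) < e^{-n(\KL(q_a\Vert q_b)-\epsilon)}$, which only uses $\E_{q_b^n}[q_a(\rx^n)/q_b(\rx^n)] \le 1$. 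Multiplying these yields $q_a^n(T_b\cap G) < e^{-n(\KL(q_a\Vert q_b)+h_a-h_b-3\epsilon)}$, and adding $q_a^n(G^{\mathsf c})$ gives $q_a^n(T_b) < e^{-n(\KL(q_a\Vert q_b)+h_a-h_b-3\epsilon)} + 2\epsilon$; feeding this into the reduction above gives the conclusion.

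I expect this last estimate to be the crux, for two reasons. The first is structural: any attempt to bound $q_a(\rx^n)$ or $q_b(\rx^n)$ pointwise on $T_a\cap T_b$ alone is circular — the $h_a$ and $h_b$ contributions cancel and no $\KL$-dependence survives — so the likelihood-ratio slab in $G$ is essential, and the divergence enters only through the Markov step. The second is bookkeeping: the exact constants in the statement (the precise $\pm 3\epsilon$ inside the exponent, and the $-2\epsilon$ and $e^{-2n\epsilon}$ inside the logarithm) depend delicately on how much of the slack $\epsilon$ is spent in the definition of $G$ versus in facts (i)--(iii); the naive allocation above lands on a hypothesis of the stated form but with the $\epsilon$-coefficient shifted, and — because the hypothesis is only non-vacuous in the regime where $n\epsilon$ stays bounded — one cannot simply drive the law-of-large-numbers error terms to zero independently of $\epsilon$. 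Matching the stated coefficients will require choosing the widths in $G$ as calibrated fractions of $\epsilon$ and tracking every factor of $e^{n\epsilon}$, just as in the proof of Theorem~\ref{theorem:error_rate}.
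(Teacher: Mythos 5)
Your proof is essentially the paper's: your set $G$ is exactly $T_a \cap T_{ab}$, where $T_{ab}$ is the relative-entropy typical set of \cite{cover2012elements} (Section 11.8), your Markov step is their Theorem 11.8.2 bound $q_b(T_{ab}) < e^{-n(\KL(q_a\Vert q_b)-\epsilon)}$, and the volume--probability conversions via the pointwise density bounds on $T_a$ and $T_b$ are the same; the paper merely organizes this as a contradiction argument that tracks $q_a(T_a\cap T_b)$ directly instead of factoring through the Stein-type bound on $q_a(T_b)$ (which is its Lemma \ref{lemma:0}). Your worry about the $\epsilon$-coefficient is well founded but is not a defect of your route relative to the paper's: the paper's own algebra in the final display also yields the requirement $\KL(q_a\Vert q_b) > h_b - h_a + 3\epsilon + \frac{1}{n}\log\bigl(\frac{1}{r(1-\epsilon)e^{-2n\epsilon}-2\epsilon}\bigr)$, so the $-3\epsilon$ in the theorem statement appears to be a sign slip, and your derivation matches what the paper actually proves.
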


Proof of both Theorems is given in Section \ref{sec:proofs}. The result of Theorem \ref{theorem:intersection} is that we can be sure that if every model in a density of learned distributions is successful in minimizing the KL-divergence with respect to the ground-truth density, then at least part of the multi-typical set must lie in the interior of the typical set $T_\epsilon^{(n)}(p(\rx))$. The result of Theorem \ref{theorem:reverse} is that sufficiently different models in an ensemble must not have large intersection. The condition should hold for the $r$ such that the ratio of volumes in Theorem \ref{theorem:reverse} is approximately the same for every pair of models in the ensemble.  If the conditions listed in both theorems hold, then the multi-typical set must lie almost entirely in $T_\epsilon^{(n)}(p(\rx))$. The result of this is that the multi-typical set, if constructed according to the conditions given, can be used as a conservative estimate of the typical set $T_\epsilon^{(n)}(p(\rx))$. Here we use the word conservative, since the conditions only are sufficient for the multi-typical set to act as an under-approximation of $T_\epsilon^{(n)}(p(\rx))$.

The point of proving these theorems is to demonstrate that, in theory, an ensemble of learned models can approximate the typical set of the ground-truth distribution. The bounds given are sufficient conditions, but in practice we find that it is much easier to find an ensemble of models such that the multi-typical set approximates the ground-truth typical set than the bounds require. Therefore, these theorems should be taken as proof that such a procedure is well motivated, and not necessarily a guide for choosing the values of $\epsilon$ and $n$ in practice.



\section{Related Work} \label{sec:related}
There are many works interested in goodness-of-fit testing for modern high-dimensional data distributions. The works most similar to ours are \cite{nalisnick2019detecting} and \cite{choi2018generative}. The authors in \cite{nalisnick2019detecting} use a test for typicality, showing empirically that it can perform about as well as other traditional goodness-of-fit tests. 

The authors in \cite{choi2018generative} leverage an ensemble of learned density models to obtain better tests for anomaly detection than the other tests using a single learned distribution. Specifically, they leverage the Watanabe-Akaike Information Criterion \citep{watanabe2010asymptotic} to produce a score which averages the log probabilities across models of the ensemble, and subtracts the variance. Without considering the variance term, taking the average log probability is equivalent to the geometric mean of the probability density functions in the ensemble, which acts as a sort of soft-min over the ensemble. Hence, in a way, that method can be thought of as generating a density that is the point-wise least probable density in the ensemble, where as the method we propose can be thought of as taking the point-wise least typical density in the ensemble. We elaborate more on this in Section \ref{sec:typicaldensity}. 

We also point out that the authors in \cite{choi2018generative} propose a baseline test that resembles a test on typicality. They measure the distance from latent variables in a normalizing flow model from the origin, assuming that the distribution defined over the latent space is an isotropic Gaussian. This measure only corresponds to measuring typicality if the bijection is volume preserving. 

There are many other methods for performing anomaly detection which do not rely on traditional hypothesis testing theory. For example, the works in \cite{hendrycks2016baseline, hendrycks2018deep, liang2017enhancing} propose using the outputs of trained neural networks themselves to discern when the networks are presented with out-of-distribution data. By examining the soft-max probabilities at the penultimate layer of these networks, they can distinguish between in- and out-of-distribution inputs with relatively good success. The work in \cite{schlegl2017unsupervised} similarly uses the output of the final layers in the Discriminator in a GAN to detect anomalous data-points. The authors in \cite{mcallister2018robustness} use a VAE to make data points more likely under a learned distribution before feeding them to a trained model. An issue with all of these methods mentioned is that they rely on checking whether low-dimensional, learned representations of the data are anomalous or not. This has the downside that by construction, anomalous aspects of the data can be missed if the low-dimensional representation is invariant to those aspects. Nevertheless, some of these methods have proven to work well in certain domains. A survey of other methods for leveraging deep learning for anomaly detection can be found in in \cite{chalapathy2019deep}. 

As an important aspect of hypothesis testing, it is important to consider the many methods, old and new, for learning probability densities. Kernel Density Estimation (KDE) methods are a traditional way of estimating densities, although they do not scale to high-dimensional large data sets. An overview of KDE methods can be found in \cite{chen2017tutorial}. Some more modern approaches to modeling densities were defined in Section \ref{sec:params}. In addition to the parameterizations of densities discussed there, many extensions have been developed with appealing properties, such as \cite{chen2017pixelsnail}, \cite{ho2019flow++}, \cite{grathwohl2018ffjord}, \cite{de2019block}, \cite{van2017neural}, and \cite{razavi2019generating}, to list a few. There are also other objectives that are considered when learning densities, such as training a density in a adversarial manner as in \cite{grover2018flow} and \cite{danihelka2017comparison}. Alternatively, \cite{li2018implicit} define implicit maximum likelihood estimation, which can be thought of as approximately minimizing the reverse KL-divergence. 

\nocite{song2017pixeldefend}
\nocite{rippel2013high}
\nocite{theis2015note}
\nocite{makhzani2015adversarial}

\section{Conclusion}
We have presented a case study investigating why learned density models can perform poorly when used in goodness-of-fit tests. We argue that the hypothesis test used to test for anomalous data must be considered together with the procedure for learning a density which is used by the test. Along these lines, we propose training an ensemble of learned densities and jointly testing for typicality with respect to each of these densities. We proved that the intersection of typical sets of such an ensemble can lie in the interior of the typical set of the ground-truth data distribution if the ensemble is constructed correctly. We demonstrated on a simple and realistic example that in practice, this procedure outperforms using typicality tests on a single learned density. 

Investigations left for future work include extending error rate bounds to the case of the multi-typicality test proposed here, and further evaluating the empirical performance of our proposed test on different domains.

\bibliography{main}

\begin{thebibliography}{34}
\providecommand{\natexlab}[1]{#1}
\providecommand{\url}[1]{\texttt{#1}}
\expandafter\ifx\csname urlstyle\endcsname\relax
  \providecommand{\doi}[1]{doi: #1}\else
  \providecommand{\doi}{doi: \begingroup \urlstyle{rm}\Url}\fi

\bibitem[Balakrishnan et~al.(2019)Balakrishnan, Wasserman,
  et~al.]{balakrishnan2019hypothesis}
Sivaraman Balakrishnan, Larry Wasserman, et~al.
\newblock Hypothesis testing for densities and high-dimensional multinomials:
  Sharp local minimax rates.
\newblock \emph{The Annals of Statistics}, 47\penalty0 (4):\penalty0
  1893--1927, 2019.

\bibitem[Barron(1989)]{barron1989uniformly}
Andrew~R Barron.
\newblock Uniformly powerful goodness of fit tests.
\newblock \emph{The Annals of Statistics}, pp.\  107--124, 1989.

\bibitem[Chalapathy \& Chawla(2019)Chalapathy and Chawla]{chalapathy2019deep}
Raghavendra Chalapathy and Sanjay Chawla.
\newblock Deep learning for anomaly detection: A survey.
\newblock \emph{arXiv preprint arXiv:1901.03407}, 2019.

\bibitem[Chen et~al.(2017)Chen, Mishra, Rohaninejad, and
  Abbeel]{chen2017pixelsnail}
Xi~Chen, Nikhil Mishra, Mostafa Rohaninejad, and Pieter Abbeel.
\newblock Pixelsnail: An improved autoregressive generative model.
\newblock \emph{arXiv preprint arXiv:1712.09763}, 2017.

\bibitem[Chen(2017)]{chen2017tutorial}
Yen-Chi Chen.
\newblock A tutorial on kernel density estimation and recent advances.
\newblock \emph{Biostatistics \& Epidemiology}, 1\penalty0 (1):\penalty0
  161--187, 2017.

\bibitem[Choi \& Jang(2018)Choi and Jang]{choi2018generative}
Hyunsun Choi and Eric Jang.
\newblock Generative ensembles for robust anomaly detection.
\newblock \emph{arXiv preprint arXiv:1810.01392}, 2018.

\bibitem[Cover \& Thomas(2012)Cover and Thomas]{cover2012elements}
Thomas~M Cover and Joy~A Thomas.
\newblock \emph{Elements of information theory}.
\newblock John Wiley \& Sons, 2012.

\bibitem[Danihelka et~al.(2017)Danihelka, Lakshminarayanan, Uria, Wierstra, and
  Dayan]{danihelka2017comparison}
Ivo Danihelka, Balaji Lakshminarayanan, Benigno Uria, Daan Wierstra, and Peter
  Dayan.
\newblock Comparison of maximum likelihood and gan-based training of real nvps.
\newblock \emph{arXiv preprint arXiv:1705.05263}, 2017.

\bibitem[De~Cao et~al.(2019)De~Cao, Titov, and Aziz]{de2019block}
Nicola De~Cao, Ivan Titov, and Wilker Aziz.
\newblock Block neural autoregressive flow.
\newblock \emph{arXiv preprint arXiv:1904.04676}, 2019.

\bibitem[Dinh et~al.(2014)Dinh, Krueger, and Bengio]{dinh2014nice}
Laurent Dinh, David Krueger, and Yoshua Bengio.
\newblock Nice: Non-linear independent components estimation.
\newblock \emph{arXiv preprint arXiv:1410.8516}, 2014.

\bibitem[Dinh et~al.(2016)Dinh, Sohl-Dickstein, and Bengio]{dinh2016density}
Laurent Dinh, Jascha Sohl-Dickstein, and Samy Bengio.
\newblock Density estimation using real nvp.
\newblock \emph{arXiv preprint arXiv:1605.08803}, 2016.

\bibitem[Grathwohl et~al.(2018)Grathwohl, Chen, Betterncourt, Sutskever, and
  Duvenaud]{grathwohl2018ffjord}
Will Grathwohl, Ricky~TQ Chen, Jesse Betterncourt, Ilya Sutskever, and David
  Duvenaud.
\newblock Ffjord: Free-form continuous dynamics for scalable reversible
  generative models.
\newblock \emph{arXiv preprint arXiv:1810.01367}, 2018.

\bibitem[Grover et~al.(2018)Grover, Dhar, and Ermon]{grover2018flow}
Aditya Grover, Manik Dhar, and Stefano Ermon.
\newblock Flow-gan: Combining maximum likelihood and adversarial learning in
  generative models.
\newblock In \emph{Thirty-Second AAAI Conference on Artificial Intelligence},
  2018.

\bibitem[Hendrycks \& Gimpel(2016)Hendrycks and Gimpel]{hendrycks2016baseline}
Dan Hendrycks and Kevin Gimpel.
\newblock A baseline for detecting misclassified and out-of-distribution
  examples in neural networks.
\newblock \emph{arXiv preprint arXiv:1610.02136}, 2016.

\bibitem[Hendrycks et~al.(2018)Hendrycks, Mazeika, and
  Dietterich]{hendrycks2018deep}
Dan Hendrycks, Mantas Mazeika, and Thomas~G Dietterich.
\newblock Deep anomaly detection with outlier exposure.
\newblock \emph{arXiv preprint arXiv:1812.04606}, 2018.

\bibitem[Ho et~al.(2019)Ho, Chen, Srinivas, Duan, and Abbeel]{ho2019flow++}
Jonathan Ho, Xi~Chen, Aravind Srinivas, Yan Duan, and Pieter Abbeel.
\newblock Flow++: Improving flow-based generative models with variational
  dequantization and architecture design.
\newblock \emph{arXiv preprint arXiv:1902.00275}, 2019.

\bibitem[Kingma \& Welling(2013)Kingma and Welling]{kingma2013auto}
Diederik~P Kingma and Max Welling.
\newblock Auto-encoding variational bayes.
\newblock \emph{arXiv preprint arXiv:1312.6114}, 2013.

\bibitem[Kingma \& Dhariwal(2018)Kingma and Dhariwal]{kingma2018glow}
Durk~P Kingma and Prafulla Dhariwal.
\newblock Glow: Generative flow with invertible 1x1 convolutions.
\newblock In \emph{Advances in Neural Information Processing Systems}, pp.\
  10215--10224, 2018.

\bibitem[Li \& Malik(2018)Li and Malik]{li2018implicit}
Ke~Li and Jitendra Malik.
\newblock Implicit maximum likelihood estimation.
\newblock \emph{arXiv preprint arXiv:1809.09087}, 2018.

\bibitem[Liang et~al.(2017)Liang, Li, and Srikant]{liang2017enhancing}
Shiyu Liang, Yixuan Li, and R~Srikant.
\newblock Enhancing the reliability of out-of-distribution image detection in
  neural networks.
\newblock \emph{arXiv preprint arXiv:1706.02690}, 2017.

\bibitem[Makhzani et~al.(2015)Makhzani, Shlens, Jaitly, Goodfellow, and
  Frey]{makhzani2015adversarial}
Alireza Makhzani, Jonathon Shlens, Navdeep Jaitly, Ian Goodfellow, and Brendan
  Frey.
\newblock Adversarial autoencoders.
\newblock \emph{arXiv preprint arXiv:1511.05644}, 2015.

\bibitem[McAllister et~al.(2019)McAllister, Kahn, Clune, and
  Levine]{mcallister2018robustness}
Rowan McAllister, Gregory Kahn, Jeff Clune, and Sergey Levine.
\newblock Robustness to out-of-distribution inputs via task-aware generative
  uncertainty.
\newblock In \emph{International Conference on Robotics and Automation}. IEEE,
  2019.

\bibitem[Nalisnick et~al.(2018)Nalisnick, Matsukawa, Teh, Gorur, and
  Lakshminarayanan]{nalisnick2018deep}
Eric Nalisnick, Akihiro Matsukawa, Yee~Whye Teh, Dilan Gorur, and Balaji
  Lakshminarayanan.
\newblock Do deep generative models know what they don't know?
\newblock \emph{arXiv preprint arXiv:1810.09136}, 2018.

\bibitem[Nalisnick et~al.(2019)Nalisnick, Matsukawa, Teh, and
  Lakshminarayanan]{nalisnick2019detecting}
Eric Nalisnick, Akihiro Matsukawa, Yee~Whye Teh, and Balaji Lakshminarayanan.
\newblock Detecting out-of-distribution inputs to deep generative models using
  a test for typicality.
\newblock \emph{arXiv preprint arXiv:1906.02994}, 2019.

\bibitem[Oord et~al.(2016)Oord, Kalchbrenner, and Kavukcuoglu]{oord2016pixel}
Aaron van~den Oord, Nal Kalchbrenner, and Koray Kavukcuoglu.
\newblock Pixel recurrent neural networks.
\newblock \emph{arXiv preprint arXiv:1601.06759}, 2016.

\bibitem[Razavi et~al.(2019)Razavi, Oord, and Vinyals]{razavi2019generating}
Ali Razavi, Aaron van~den Oord, and Oriol Vinyals.
\newblock Generating diverse high-fidelity images with vq-vae-2.
\newblock \emph{arXiv preprint arXiv:1906.00446}, 2019.

\bibitem[Rezende et~al.(2014)Rezende, Mohamed, and
  Wierstra]{rezende2014stochastic}
Danilo~Jimenez Rezende, Shakir Mohamed, and Daan Wierstra.
\newblock Stochastic backpropagation and approximate inference in deep
  generative models.
\newblock \emph{arXiv preprint arXiv:1401.4082}, 2014.

\bibitem[Rippel \& Adams(2013)Rippel and Adams]{rippel2013high}
Oren Rippel and Ryan~Prescott Adams.
\newblock High-dimensional probability estimation with deep density models.
\newblock \emph{arXiv preprint arXiv:1302.5125}, 2013.

\bibitem[Salimans et~al.(2017)Salimans, Karpathy, Chen, and
  Kingma]{salimans2017pixelcnn++}
Tim Salimans, Andrej Karpathy, Xi~Chen, and Diederik~P Kingma.
\newblock Pixelcnn++: Improving the pixelcnn with discretized logistic mixture
  likelihood and other modifications.
\newblock \emph{arXiv preprint arXiv:1701.05517}, 2017.

\bibitem[Schlegl et~al.(2017)Schlegl, Seeb{\"o}ck, Waldstein, Schmidt-Erfurth,
  and Langs]{schlegl2017unsupervised}
Thomas Schlegl, Philipp Seeb{\"o}ck, Sebastian~M Waldstein, Ursula
  Schmidt-Erfurth, and Georg Langs.
\newblock Unsupervised anomaly detection with generative adversarial networks
  to guide marker discovery.
\newblock In \emph{International Conference on Information Processing in
  Medical Imaging}, pp.\  146--157. Springer, 2017.

\bibitem[Song et~al.(2017)Song, Kim, Nowozin, Ermon, and
  Kushman]{song2017pixeldefend}
Yang Song, Taesup Kim, Sebastian Nowozin, Stefano Ermon, and Nate Kushman.
\newblock Pixeldefend: Leveraging generative models to understand and defend
  against adversarial examples.
\newblock \emph{arXiv preprint arXiv:1710.10766}, 2017.

\bibitem[Theis et~al.(2015)Theis, Oord, and Bethge]{theis2015note}
Lucas Theis, A{\"a}ron van~den Oord, and Matthias Bethge.
\newblock A note on the evaluation of generative models.
\newblock \emph{arXiv preprint arXiv:1511.01844}, 2015.

\bibitem[van~den Oord et~al.(2017)van~den Oord, Vinyals, et~al.]{van2017neural}
Aaron van~den Oord, Oriol Vinyals, et~al.
\newblock Neural discrete representation learning.
\newblock In \emph{Advances in Neural Information Processing Systems}, pp.\
  6306--6315, 2017.

\bibitem[Watanabe(2010)]{watanabe2010asymptotic}
Sumio Watanabe.
\newblock Asymptotic equivalence of bayes cross validation and widely
  applicable information criterion in singular learning theory.
\newblock \emph{Journal of Machine Learning Research}, 11\penalty0
  (Dec):\penalty0 3571--3594, 2010.

\end{thebibliography}
\bibliographystyle{iclr2020_conference}

\appendix
\section{Appendix}

\subsection{Proof of Theorems} \label{sec:proofs}
To prove Theorem \ref{theorem:error_rate}, we make use of the following lemma.
\begin{lemma} \label{lemma:0}
    Consider a continuous random variable $\rx \in \mathcal{X}$, and two distributions $q_a(\rx)$ and $q_b(\rx).$ Denote the differential entropy of $q_a(\rx)$ and $q_b(\rx)$ as $h_a$ and $h_b$, respectively. Let $\KL(q_a \Vert q_b) = d$. Denote the typical sets of these distributions as $T_a := T_\epsilon^{(n)}(q_a(\rx))$ and $T_b := T_\epsilon^{(n)}(q_b(\rx))$. Finally, let $T_{ab} := T_\epsilon^{(n)}(\KL(q_a \Vert q_b)$ represent the relative entropy typical set (\cite{cover2012elements}, Section 11.8). Then, for $n$ sufficiently large,
    \begin{equation*}
        q_a(T_b) < e^{-n(d + h_a - h_b - 3\epsilon)} + 3\epsilon.
    \end{equation*}
\end{lemma}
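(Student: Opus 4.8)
The plan is to sandwich $q_a(T_b)$ (which we read as the product measure $q_a^n$ evaluated on $T_b\subseteq\mathcal{X}^n$) between two sets of probability close to one under $q_a^n$ — the typical set $T_a$ itself and the relative-entropy typical set $T_{ab}$ — and then to chain the three elementary pointwise estimates that the sets $T_a$, $T_b$, $T_{ab}$ each supply. First I would record those estimates, each immediate from the defining inequality of the set in question: on $T_a$, $q_a(\rx^n) < e^{-n(h_a-\epsilon)}$; on $T_b$, $q_b(\rx^n) > e^{-n(h_b+\epsilon)}$, equivalently $1 < q_b(\rx^n)\,e^{n(h_b+\epsilon)}$; and on $T_{ab}$, $\tfrac1n\log\tfrac{q_a(\rx^n)}{q_b(\rx^n)} > d-\epsilon$, equivalently $q_b(\rx^n) < q_a(\rx^n)\,e^{-n(d-\epsilon)}$.

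Second, I would invoke the asymptotic equipartition property for $q_a^n$ and the analogous statement for the relative-entropy typical set (\cite{cover2012elements}, Section~11.8; this is where $d<\infty$ enters) to obtain, for $n$ sufficiently large, $q_a^n(T_a^{\mathsf{c}}) < \epsilon$ and $q_a^n(T_{ab}^{\mathsf{c}}) < \epsilon$. Writing $S := T_b \cap T_a \cap T_{ab}$, a union bound gives $q_a^n(T_b) \le q_a^n(S) + q_a^n(T_a^{\mathsf{c}}) + q_a^n(T_{ab}^{\mathsf{c}}) < q_a^n(S) + 2\epsilon$.

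Third, I would estimate $q_a^n(S)$ by a three-step chain. By the $T_a$ bound, $q_a^n(S) = \int_S q_a(\rx^n)\,d\rx^n < e^{-n(h_a-\epsilon)}\,\mathrm{Vol}(S)$. By the $T_b$ bound applied to the constant integrand, $\mathrm{Vol}(S) \le \int_{T_b\cap T_{ab}} q_b(\rx^n)\,e^{n(h_b+\epsilon)}\,d\rx^n = e^{n(h_b+\epsilon)}\,q_b^n(T_b\cap T_{ab})$. By the $T_{ab}$ bound, $q_b^n(T_b\cap T_{ab}) = \int_{T_b\cap T_{ab}} q_b(\rx^n)\,d\rx^n < e^{-n(d-\epsilon)}\int_{T_b\cap T_{ab}} q_a(\rx^n)\,d\rx^n \le e^{-n(d-\epsilon)}$. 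Multiplying the three estimates, $q_a^n(S) < e^{-n(h_a-\epsilon)}\,e^{n(h_b+\epsilon)}\,e^{-n(d-\epsilon)} = e^{-n(d+h_a-h_b-3\epsilon)}$, whence $q_a^n(T_b) < e^{-n(d+h_a-h_b-3\epsilon)} + 2\epsilon \le e^{-n(d+h_a-h_b-3\epsilon)} + 3\epsilon$, which is the claim (one can keep the sharper $2\epsilon$, but $3\epsilon$ is all that is needed downstream).

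I expect no genuine obstacle; the lemma is essentially the continuous-density analogue of the Chernoff--Stein computation in Cover--Thomas, with the $\pm\epsilon$ slack carried through the typical-set definitions. The one point requiring care is the bookkeeping of inequality directions in the chain: the volume step must use the \emph{lower} bound on $q_b$ supplied by $T_b$, while the passage from $q_b^n$ back to $q_a^n$ must use the $T_{ab}$ bound in the direction $q_b(\rx^n) < q_a(\rx^n)e^{-n(d-\epsilon)}$; and ``$n$ sufficiently large'' must be read as large enough for both high-probability statements simultaneously.
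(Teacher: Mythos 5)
Your proof is correct and is essentially the paper's argument run forwards: the paper assumes the bound fails and pushes the same three pointwise estimates (from $T_a$, $T_b$, and $T_{ab}$) through the triple intersection and its volume until it contradicts the Cover--Thomas bound $q_b(T_{ab}) < e^{-n(d-\epsilon)}$, whereas you traverse the identical chain of inequalities in the opposite direction as a direct estimate, inlining that Cover--Thomas bound via $\int q_a \le 1$. As a minor bonus your bookkeeping yields the slightly sharper constant $2\epsilon$ before relaxing to the stated $3\epsilon$.
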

\begin{proof}[Proof of Lemma \ref{lemma:0}]
We prove this lemma through contradiction. Assume 
\begin{equation*}
        q_a(T_b) > e^{-n(d + h_a - h_b - 3\epsilon)} + 3\epsilon.
    \end{equation*}
    Then by Theorem 8.2.2 in \cite{cover2012elements} and the union bound, we have
    \begin{equation*}
        q_a(T_a \cap T_b) > e^{-n(d + h_a - h_b - 3\epsilon)} + 2\epsilon,
    \end{equation*}
    and, $q_a(T_a \cap T_{ab}) > 1-2\epsilon$. This implies that \begin{align*}
        e^{-n(d + h_a - h_b - 3\epsilon)} &< q_a(T_a \cap T_b \cap T_{ab}) \\
        &< e^{-n(h_a-\epsilon)} Vol(T_a \cap T_b \cap T_{ab}),
    \end{align*}
    implying 
    \begin{equation*}
        e^{-n(d - h_b - 2\epsilon)} < Vol(T_a \cap T_b \cap T_{ab}). 
    \end{equation*}
    We also have that 
    \begin{align*}
        q_b(T_{ab}) > q_b(T_a \cap T_b \cap T_{ab}) &> Vol(T_a \cap T_b \cap T_{ab}) \cdot \min_{\rx\in T_a\cap T_b\cap T_{ab}} q_b(x) \\
        &> e^{-n(d - h_b - 2\epsilon)}e^{-n(h_b + \epsilon)} \\
        &= e^{-n(d - \epsilon)}
    \end{align*}
    However, we also have that $q_b(T_{ab}) < e^{-n(d-\epsilon)}$ by Lemma 11.8.1 in \cite{cover2012elements}. This implies that 
    \begin{equation*}
        e^{-n(d-\epsilon)} < e^{-n(d-\epsilon)},
    \end{equation*} which forms a contradiction. Therefore, our original assumption must be false, and therefore
    \begin{equation*}
        q_a(T_b) < e^{-n(d + h_a - h_b - 3\epsilon)} + 3\epsilon.
    \end{equation*}
\end{proof}
\begin{proof}[Proof of Theorem \ref{theorem:error_rate}]
    The proof relies on the preceding lemma. Letting $T_b$ in Lemma \ref{lemma:0} correspond to our acceptance region for $\mathcal{H}_0$, $T_\epsilon^{(n)}(p(\rx))$, then Lemma \ref{lemma:0} states that for any distribution $\tilde{p}$, the probability 
    \begin{equation*}
        \tilde{p}(T_\epsilon^{(n)}(p(\rx))) < e^{-n(\KL(\tilde{p} \Vert p) + h_{\tilde{p}} - h_p - 3 \epsilon)} + 3 \epsilon.
    \end{equation*}
    Therefore, by optimizing over all distributions $\tilde{p}(\rx)$ such that $\KL(p \Vert \tilde{p}) < d$, the result follows immediately. 
\end{proof}

To prove Theorem \ref{theorem:intersection}, we make use of the following lemmas.
\begin{lemma} \label{lemma:1}
    For $n$ sufficiently large, 
    \begin{equation*}
    q_k(T_p) > (1-2\epsilon)e^{-n(d_k+\epsilon)}
    \end{equation*}
\end{lemma}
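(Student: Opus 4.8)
The plan is to sandwich $q_k(T_p)$ from below using the relative-entropy typical set, mirroring the argument already used in the proof of Lemma \ref{lemma:0} (and essentially the classical proof of Stein's lemma in \cite{cover2012elements}). Introduce $T_{pq_k} := T_\epsilon^{(n)}(\KL(p\Vert q_k))$, the set of sequences $\rx^n$ with $|\frac{1}{n}\log\frac{p(\rx^n)}{q_k(\rx^n)} - d_k| < \epsilon$. By the AEP (Theorem 8.2.2 in \cite{cover2012elements}), $p(T_p) > 1-\epsilon$ for $n$ large; and by the weak law of large numbers applied to the i.i.d.\ terms $\log\frac{p(\rx_i)}{q_k(\rx_i)}$ — whose common mean $d_k$ is finite by hypothesis, which also guarantees $p \ll q_k$ so the log is well defined $p$-a.s. — we get $p(T_{pq_k}) > 1-\epsilon$ for $n$ large. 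Taking $n$ large enough for both bounds and applying the union bound yields $p(T_p \cap T_{pq_k}) > 1-2\epsilon$.

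Next I would exploit the defining inequality of $T_{pq_k}$ pointwise: on that set $\log\frac{p(\rx^n)}{q_k(\rx^n)} < n(d_k+\epsilon)$, i.e.\ $q_k(\rx^n) > p(\rx^n)e^{-n(d_k+\epsilon)}$. Integrating this pointwise bound over $T_p \cap T_{pq_k}$ gives
\[
q_k(T_p) \;\geq\; q_k(T_p \cap T_{pq_k}) \;>\; e^{-n(d_k+\epsilon)}\, p(T_p \cap T_{pq_k}) \;>\; (1-2\epsilon)\,e^{-n(d_k+\epsilon)},
\]
which is exactly the claimed inequality.

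The only real obstacle is the bookkeeping around ``$n$ sufficiently large'': one must pick $n$ large enough that \emph{both} the AEP bound for $T_p$ and the law-of-large-numbers bound for $T_{pq_k}$ hold simultaneously, which is immediate by taking the maximum of the two thresholds, and finiteness of $d_k$ is precisely what makes the second bound available. Everything past that is a one-line integration, so I do not anticipate any genuine difficulty; the content of the lemma is really just the observation that $T_p$ has non-negligible mass under $q_k$, degraded by at most the exponential factor $e^{-n(d_k+\epsilon)}$.
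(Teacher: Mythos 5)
Your argument is correct and is essentially the approach the paper takes: the paper's entire proof of Lemma~\ref{lemma:1} is a citation to Lemma~11.8.1 of \cite{cover2012elements}, whose proof is exactly your relative-entropy-typical-set computation. If anything your write-up is more complete than the bare citation, since that lemma lower-bounds $q_k$ of the \emph{relative-entropy} typical set rather than of $T_p$; the intersection $T_p \cap T_{pq_k}$ and union bound you supply are precisely what is needed to transfer the bound to $T_p$, and they account for the factor $(1-2\epsilon)$ appearing here in place of the $(1-\epsilon)$ of the cited lemma.
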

\begin{proof}[Proof of Lemma \ref{lemma:1}]
See \cite{cover2012elements}, Lemma 11.8.1.
\end{proof}

\begin{lemma} \label{lemma:2} For $n$ sufficiently large,
\begin{equation*}
    Vol\Big(T_p \cap T_{q_k}\Big) \geq (1-2\epsilon) e^{n(h(p)-3\epsilon)}-\epsilon e^{n(h(p)+d_k-2\epsilon)}.
\end{equation*}
\end{lemma}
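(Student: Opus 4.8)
The plan is to sandwich the density $q_k$ between multiples of $p$ on a set of full probability, and then trade probability estimates for volume estimates using the uniform pointwise density bounds that hold on typical sets (exactly the device already used in the proof of Lemma \ref{lemma:0}). Concretely, I would bring in the relative‑entropy typical set $T_{pq_k} := T_\epsilon^{(n)}(\KL(p\Vert q_k))$: for $n$ large one has $p(T_{pq_k}) > 1-\epsilon$ and, on $T_{pq_k}$, $e^{-n(d_k+\epsilon)} \le q_k(\rx^n)/p(\rx^n) \le e^{-n(d_k-\epsilon)}$. Combining this with the AEP for $p$ (i.e.\ $p(T_p)>1-\epsilon$ and $e^{-n(h_p+\epsilon)} \le p(\rx^n) \le e^{-n(h_p-\epsilon)}$ on $T_p$), we get on the intersection $T_p \cap T_{pq_k}$ the two‑sided bound $e^{-n(h_p+d_k+2\epsilon)} \le q_k(\rx^n) \le e^{-n(h_p+d_k-2\epsilon)}$, where $h_p,h_k$ denote the differential entropies of $p,q_k$.

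First I would lower‑bound $Vol(T_p \cap T_{pq_k})$. Using the lower half of the ratio bound, $q_k(T_p \cap T_{pq_k}) \ge e^{-n(d_k+\epsilon)}\,p(T_p \cap T_{pq_k}) \ge (1-2\epsilon)e^{-n(d_k+\epsilon)}$, which is a one‑line rederivation of the estimate behind Lemma \ref{lemma:1} (applied on $T_p\cap T_{pq_k}$ rather than $T_p$). Using the upper half of the density bound, $q_k(T_p \cap T_{pq_k}) \le e^{-n(h_p+d_k-2\epsilon)}\,Vol(T_p \cap T_{pq_k})$. Dividing gives $Vol(T_p \cap T_{pq_k}) \ge (1-2\epsilon)e^{n(h_p-3\epsilon)}$. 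Next I would bound the part of that set which escapes $T_{q_k}$: on $T_p \cap T_{pq_k}$ the density obeys $q_k \ge e^{-n(h_p+d_k+2\epsilon)}$, so $q_k\big(T_p \cap T_{pq_k} \cap T_{q_k}^{\mathsf{c}}\big) \ge e^{-n(h_p+d_k+2\epsilon)}\,Vol\big(T_p \cap T_{pq_k} \cap T_{q_k}^{\mathsf{c}}\big)$, and since the AEP gives $q_k(T_{q_k}^{\mathsf{c}}) < \epsilon$ for $n$ large, this forces $Vol\big(T_p \cap T_{pq_k} \cap T_{q_k}^{\mathsf{c}}\big) \le \epsilon\, e^{n(h_p+d_k+2\epsilon)}$. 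Finally, since $T_p \cap T_{pq_k} \cap T_{q_k} \subseteq T_p \cap T_{q_k}$,
\begin{equation*}
Vol\big(T_p \cap T_{q_k}\big) \;\ge\; Vol\big(T_p \cap T_{pq_k}\big) - Vol\big(T_p \cap T_{pq_k} \cap T_{q_k}^{\mathsf{c}}\big) \;\ge\; (1-2\epsilon)e^{n(h_p-3\epsilon)} - \epsilon\, e^{n(h_p+d_k+2\epsilon)},
\end{equation*}
which is the asserted inequality; the precise form of the last exponent ($h_p+d_k-2\epsilon$ versus $h_p+d_k+2\epsilon$) is governed by the $\epsilon$‑slack one adopts when defining $T_{pq_k}$, and can be absorbed by a mild recalibration of the tolerances.

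The main obstacle is the second step. One is tempted to bound $q_k$‑probabilities by freely intersecting with $T_{pq_k}$, but that fails: $T_{pq_k}$ carries almost all of the $p$‑mass yet only exponentially small $q_k$‑mass ($q_k(T_{pq_k}) \le e^{-n(d_k-\epsilon)}$), so a naive "small $q_k$‑measure $\Rightarrow$ small volume" argument has nothing to grab onto. The volume of the bad region can only be controlled by first securing a pointwise \emph{lower} bound on $q_k$ there, and that bound is available precisely on $T_p \cap T_{pq_k}$; making the three $\epsilon$‑slacks (from $T_p$, from $T_{pq_k}$, and from the tail bound $q_k(T_{q_k}^{\mathsf{c}})<\epsilon$) line up is the only delicate point, everything else being the routine "$\text{probability} \le \sup\text{-density}\cdot\text{volume}$ / $\text{probability} \ge \inf\text{-density}\cdot\text{volume}$" bookkeeping.
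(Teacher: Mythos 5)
Your proposal is correct in spirit and follows the same core mechanism as the paper's proof (lower-bound a $q_k$-probability of an intersection, then convert probability to volume via the pointwise density bounds that define the typical sets), but it differs in two ways worth noting. First, you are actually \emph{more} careful than the paper: the paper's own proof applies the bound $q_k(\rx^n) \le p(\rx^n)e^{-n(d_k-\epsilon)}$ pointwise on $T_{q_k}\cap T_p$, which does not follow from membership in those two sets alone (they only control $-\tfrac{1}{n}\log q_k$ near $h_{q_k}$ and $-\tfrac{1}{n}\log p$ near $h_p$, and $h_{q_k}-h_p \ne d_k$ in general); that ratio bound is exactly the defining property of the relative-entropy typical set $T_{pq_k}$, which you introduce explicitly and the paper leaves implicit. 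Second, your final constant is weaker than the one claimed: you obtain $(1-2\epsilon)e^{n(h_p-3\epsilon)} - \epsilon\, e^{n(h_p+d_k+2\epsilon)}$ rather than $(1-2\epsilon)e^{n(h_p-3\epsilon)} - \epsilon\, e^{n(h_p+d_k-2\epsilon)}$, because you perform the excision of $T_{q_k}^{\mathsf{c}}$ at the \emph{volume} level, converting the $\epsilon$ of escaped $q_k$-mass into volume with the inf-density $e^{-n(h_p+d_k+2\epsilon)}$. The paper instead subtracts at the \emph{probability} level -- $q_k(T_p\cap T_{pq_k}\cap T_{q_k}) \ge (1-2\epsilon)e^{-n(d_k+\epsilon)} - \epsilon$ by the union bound -- and then converts the whole quantity to volume in one step by dividing by the sup-density $e^{-n(h_p+d_k-2\epsilon)}$, which yields the stated bound exactly. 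Since the $\epsilon$ here is the same one that defines the typical sets, it cannot be freely ``recalibrated'' without changing the lemma's statement (and the downstream condition on $d_k$ in Theorem \ref{theorem:intersection} by an additive $4\epsilon$); the fix is simply to reorder your last two steps as just described, after which your argument proves the lemma as stated and, as a bonus, repairs the paper's unjustified pointwise ratio bound.
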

\begin{proof}[Proof of Lemma \ref{lemma:2}]
For $n$ sufficiently large, $q_k(T_{q_k}) > 1-\epsilon$, as shown in \cite{cover2012elements}, Theorem 8.2.2. By the union bound and Lemma \ref{lemma:1}, 
\begin{align*}
    q_k(T_{q_k}^\mathsf{c} \ \cup \ T_p^\mathsf{c}) &< \epsilon + 1 - (1 - 2\epsilon)e^{-n(d_k+ \epsilon)} \\
    (1-2\epsilon) e^{-n(d+\epsilon)} - \epsilon &< q_k(T_{q_k} \ \cap \ T_p) \\
    &\leq \int_{\rx \in T_{q_k} \cap T_p} p(\rx^n)e^{-n(d_k-\epsilon)}d\rx^n \\
    &\leq \int_{\rx \in T_{q_k} \cap T_p} e^{-n(h(p)-\epsilon)}e^{-n(d_k-\epsilon)}d\rx^n \\
    &= e^{-n(h(p)+d_k-2\epsilon)}Vol\big(T_{q_k}\cap T_p\big) \\
    (1-2\epsilon) e^{n(h(p)-3\epsilon)}-\epsilon e^{n(h(p)+d_k-2\epsilon)} &< Vol\big(T_{q_k}\cap T_p\big)
\end{align*}
\end{proof}
\begin{lemma} \label{lemma:3}
For sets $S_i \subset \mathcal{X}, i\in\{0,...,K\}$, if 
\begin{equation*}
    \frac{Vol(S_i \cap S_0)}{Vol(S_0)} > \frac{K-1}{K},\ \ \forall i\in\{1,...,K\},
\end{equation*}
Then $Vol(\bigcap_{i=0}^K S_i) > 0$.
\end{lemma}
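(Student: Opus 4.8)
The plan is to run a complementation-and-union-bound argument entirely inside $S_0$. I will take for granted that $0 < Vol(S_0) < \infty$ (as holds in every intended use of the lemma, where $S_0$ is a typical set), so that the ratios in the hypothesis are meaningful. For each $i \in \{1,\dots,K\}$ define $A_i := S_i \cap S_0$, a subset of $S_0$. The hypothesis $Vol(A_i)/Vol(S_0) > (K-1)/K$ is then equivalent to the statement that $S_0$ loses only a small sliver when we pass to $A_i$, namely $Vol(S_0 \setminus A_i) < Vol(S_0)/K$.

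Next I would apply finite subadditivity of volume to the $K$ complementary pieces sitting inside $S_0$:
\begin{equation*}
Vol\Big(\bigcup_{i=1}^K (S_0\setminus A_i)\Big) \le \sum_{i=1}^K Vol(S_0\setminus A_i) < K\cdot \frac{Vol(S_0)}{K} = Vol(S_0).
\end{equation*}
By De Morgan relative to $S_0$ we have $\bigcup_{i=1}^K (S_0\setminus A_i) = S_0 \setminus \bigcap_{i=1}^K A_i$, so this display says the "bad" region omits a positive-volume portion of $S_0$; concretely, $Vol(\bigcap_{i=1}^K A_i) = Vol(S_0) - Vol(\bigcup_{i=1}^K(S_0\setminus A_i)) > 0$. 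Since $\bigcap_{i=1}^K A_i = \bigcap_{i=1}^K (S_i \cap S_0) = \bigcap_{i=0}^K S_i$, this is exactly the desired conclusion.

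There is no real obstacle here; the only points that warrant a word of care are the implicit assumption that $Vol(S_0)$ is finite and positive — without which the ratio hypothesis is ill-posed, though if one insists on full generality one may intersect everything with a large ball and pass to the limit — and the elementary set identity $\bigcup_i(S_0\setminus A_i)=S_0\setminus\bigcap_i A_i$, used together with finite additivity of volume on the two-piece partition of $S_0$ into $\bigcap_i A_i$ and its complement within $S_0$. Note also that the constant $(K-1)/K$ is precisely the threshold that keeps the union bound over $K$ sets below $Vol(S_0)$, which is why the lemma is stated with that value.
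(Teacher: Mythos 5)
Your proof is correct and is exactly the union-bound argument the paper invokes (its proof of Lemma \ref{lemma:3} is the one-line remark ``immediate consequence of union bound''); you have simply written out the details of passing to complements within $S_0$, summing the $K$ slivers each of volume less than $Vol(S_0)/K$, and concluding the intersection has positive volume. Your side remarks on measurability and the finiteness of $Vol(S_0)$ are reasonable bookkeeping but do not change the substance.
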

\begin{proof}[Proof of Lemma \ref{lemma:3}]
Immediate consequence of union bound.
\end{proof}

\begin{proof}[Proof of Theorem \ref{theorem:intersection}] 
For $n$ sufficiently large, $Vol\big(T(p)\big) < e^{n(h(p)+\epsilon)}$ (\cite{cover2012elements}, Theorem 8.2.2). If for all $k\in\{1,...,K\}$, \begin{equation*} 
    \frac{Vol(T(q_k)\cap T(p))}{Vol(T{p})} > \frac{K-1}{K},
\end{equation*} then by Lemma \ref{lemma:3}, the result holds. For $k\in\{1,...,K\}$,
\begin{align*}
    \frac{Vol(T(q_k) \cap T(p))}{Vol(T(p))} &> \frac{(1-2\epsilon) e^{n(h(p)-3\epsilon)}-\epsilon e^{n(h(p)+d_k-2\epsilon)}}{e^{n(h(p)+\epsilon)}} \\
    &= (1-2\epsilon)e^{-4n\epsilon}-\epsilon e^{n(d_k-3\epsilon)}.
\end{align*}
Therefore if 
\begin{equation*}
    \frac{K-1}{K} < (1-2\epsilon)e^{-4n\epsilon}-\epsilon e^{n(d_k-3\epsilon)},
\end{equation*} or equivalently 
\begin{equation*}
    d_k < \frac{1}{n}\log \Big( \frac{1-2\epsilon}{\epsilon}e^{-4n\epsilon} - \frac{K-1}{\epsilon K} \Big) + 3\epsilon,
\end{equation*}
then the result follows immediately. Note that the following condition is only valid if the argument of the logarithm is positive, and only possible if the entire right-hand side is positive. An sufficient condition for the RHS to be positive is,
\begin{equation*}
\frac{1}{4\epsilon}\log \big(\frac{1-2\epsilon}{\frac{K-1}{K} + \epsilon}\big) > n.
\end{equation*}
These conditions are the two conditions in Theorem \ref{theorem:intersection}.
\end{proof}

\begin{proof}[Proof of Theorem \ref{theorem:reverse}]
Similar to Lemma \ref{lemma:0}, we prove this by contradiction. We make use of bounds that hold for $n$ sufficiently large, which we assume from here on to avoid repeatedly stating so. Let $T_{ab} := T_\epsilon^{(n)}(\KL(q_a \Vert q_b))$ represent the relative entropy typical set (\cite{cover2012elements}, Section 11.8).

Assume that, in fact, 
\begin{equation*}
    \frac{Vol\big(T_a \cap T_b\big)}{Vol\big(T_a\big)} > r.
\end{equation*}
Therefore, by Theorem 8.2.2 in \cite{cover2012elements},
\begin{align*}
    Vol\big(T_a \cap T_b \big) &> r Vol\big(T_a\big) \\
    &> r (1-\epsilon) e^{n(h_a-\epsilon)}.
\end{align*} However, by the definition of $T_a$,
\begin{align*}
    Vol\big(T_a \cap T_b\big) &< \frac{q_a(T_a \cap T_b)}{\min_{x^n\in T_a\cap T_b} q_a(x^n)} \\
    &< \frac{q_a(T_a \cap T_b)}{e^{-n(h_a+\epsilon)}}.
\end{align*}This then implies that 
\begin{equation*}
    q_a(T_a \cap T_b) > r(1-\epsilon)e^{-2n\epsilon}.
\end{equation*} Now, using the union bound and Theorem 11.8.2 in \cite{cover2012elements}, we have that \begin{equation*}
    q_a(T_a \cap T_{ab}) > 1-2\epsilon.
\end{equation*} Therefore, again by the union bound,
\begin{align*}
    r(1-\epsilon)e^{-2n\epsilon}-2\epsilon &< q_a(T_a \cap T_b \cap T_{ab})  \\
    &= \int_{x^n \in T_a\cap T_b \cap T_{ab}} q_a(x^n) dx^n \\
    &< e^{-n(h_a - \epsilon)}Vol(T_a \cap T_b \cap T_{ab}) \\
    \implies V(T_a \cap T_b \cap T_{ab}) &> (r(1-\epsilon)e^{-2n\epsilon}-2\epsilon) e^{n(h_a -\epsilon)}.
\end{align*}
Finally, we have that
\begin{align*}
    q_b(T_{ab}) \geq q_b(T_a \cap T_b \cap T_{ab}) &\geq Vol(T_a \cap T_b \cap T_{ab}) \cdot \min_{x^n\in T_a \cap T_b \cap T_{ab}} q_b(x^n) \\
    &= Vol(T_a \cap T_b \cap T_{ab}) e^{-n(h_b + \epsilon)} \\
    &> (r(1-\epsilon)e^{-2n\epsilon}-2\epsilon)e^{-n(h_b-h_a+2\epsilon)}.
\end{align*}
From Theorem 11.8.2 referenced above, we also have that $q_b(T_{ab}) < e^{-n(\KL(q_a \Vert q_b) - \epsilon)}$, which implies the following.
\begin{align*}
    (r(1-\epsilon)e^{-2n\epsilon}-2\epsilon)e^{-n(h_b-h_a+2\epsilon)} &< e^{-n(\KL(q_a \Vert q_b)+\epsilon)} \\
    \log\big(r(1-\epsilon)e^{-2n\epsilon}-2\epsilon\big) - n(h_b-h_a+2\epsilon) &< -n\big(\KL(q_a \Vert q_b)-\epsilon\big).
\end{align*}
Rearranging, this gives rise to the condition:
\begin{equation*}
    \KL(q_a \Vert q_b) > h_b - h_a - 3\epsilon + \frac{1}{n}\log \Big( \frac{1}{r(1-\epsilon)e^{-2n\epsilon}-2\epsilon}\Big).
\end{equation*}
If this condition does not hold, then there is a contradiction, and our original assumption that 
\begin{equation*}
    \frac{Vol\big(T_a \cap T_b\big)}{Vol\big(T_a\big)} < r
\end{equation*} must be false.
\end{proof}

\subsection{Expressing the Multi-Typical Set as the Typical Set of some Distribution} \label{sec:typicaldensity}

Recall the definition of the multi-typical set for a given ensemble of probability distributions: 
\begin{equation}
    T_\epsilon^{(n)}(\{q_1(\rx), ..., q_K(\rx)\}) := \Big\{ \rx^n \in \mathcal{X}^n : \max_{k\in\{1,...,K\}} \vert -\log q_k(\rx^n) - h_k \vert < \epsilon   
    \Big\}.
\end{equation}

An interesting question to ask is if there exist a probability distribution which has typical set corresponding to the multi-typical set defined here. Here we show that we can approximately recover an un-normalized version of such a distribution. 

Define $\tilde{q}_k(\rx) := e^{h_k}q_k(\rx)$. Then we have 
\begin{equation}
    T_\epsilon^{(n)}(\{q_1(\rx), ..., q_K(\rx)\}) = \Big\{ \rx^n \in \mathcal{X}^n :
    \max_{k\in\{1,...,K\}} \vert \log \tilde{q}_k(\rx^n) \vert < \epsilon   
    \Big\}.
\end{equation}.

Defining $\tilde{m}(x) := \tilde{q}_k(x), \ k = arg\min_i \vert \log \tilde{q}_i(x) \vert $,
we have 
\begin{equation}
    T_\epsilon^{(n)}(\{q_1(\rx), ..., q_K(\rx)\}) = \Big\{ \rx^n \in \mathcal{X}^n :
     \vert \log \tilde{m}(\rx^n) \vert < \epsilon   
    \Big\}.
\end{equation}.

Now, define $Z = \int_{\rx\in\mathcal{X}} \tilde{m}(\rx) d\rx$, and $m(\rx) = \tilde{m}(\rx) / Z$ such that $m(\rx)$ is a valid probability distribution. Then if $\E_{m}[-\log m(x)] = \log(Z)$, then the distribution $m(x)$ has typical set identical to the set $T_\epsilon^{(n)}(\{q_1(\rx), ..., q_K(\rx)\})$. The property that $\E_{m}[-\log m(\rx)] = \log(Z)$ will not hold exactly in practice, but often times it is approximately true. The reason this is true is because over the space $\mathcal{X}$, each of the functions $\log \tilde{q}_k(\rx)$ have zero mean. Pointwise, the function $\tilde{m}(x)$ takes as its value the $\tilde{q}_k$ whose log value is farthest from the origin. If the $\tilde{q}_k$ are chosen such that on average, equal mass is kept on either side of the origin (in log-space), then the property will hold. 

The result of such a procedure is a approximation of the density which has typical set identical to the multi-typical set. Since in practice computing the normalization constant $Z$ is intractable, we can learn another divergence, using $\tilde{m}(\rx)$ as a target instead of $p(\rx)$. The reason we might wish to do this, is by having an explicit form of $\tilde{m}(\rx)$, we can learn a density by optimizing other divergences than the forward KL-divergence, such as the reverse KL-divergence. Note that if we optimize 
\begin{align}
    \KL(q(x) \Vert \tilde{m}(x)) &= \KL(q(x) \Vert Z m(x)) \\ &= \KL(q(x) \Vert m(x)) - \log Z.
\end{align}
Therefore, knowledge of the normalization constant is not needed in this context, since the constant term doesn't affect the optimization procedure. Similar results can be shown for other divergences.  

\subsection{Visualizations of Samples from the Mixture of Gaussians Experiment}

\begin{figure}[h] 
\begin{center}
\includegraphics[width=1.0\textwidth]{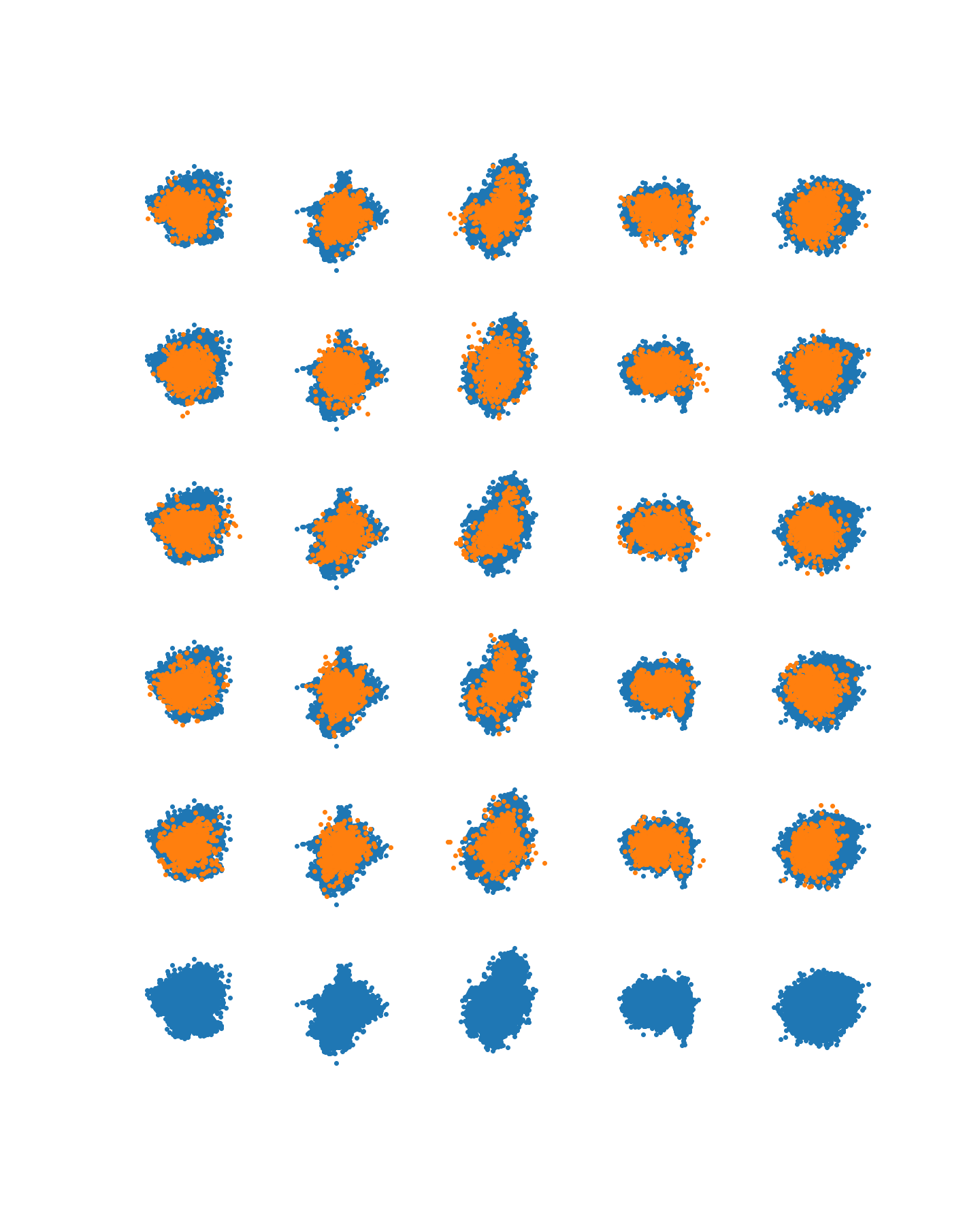}
\end{center}
\caption{Samples from learned densities which are parameterized as Mixture of Gaussians. Random projections (columns) of samples from different learned densities (rows), are overlaid on samples from the target Mixture of Gaussians distribution. The bottom row displays the ground-truth samples unobstructed for reference.}
\label{fig:learned_samples}
\end{figure}


\end{document}